\documentclass{article}
\usepackage{abstract}
\usepackage{natbib}
\usepackage{amsmath}
\usepackage{amssymb}
\usepackage{amsthm}
\usepackage{authblk}
\usepackage{bm}
\usepackage{geometry} 
\usepackage{graphicx}
\usepackage{hyperref} \usepackage[capitalise, nameinlink]{cleveref}
\usepackage{thmtools, thm-restate}

 \crefname{Theorem}{Theorem}{Theorems}
\newtheorem{Proposition}{Proposition}[section] \crefname{Proposition}{Proposition}{Propositions}
\newtheorem{Corollary}{Corollary}[section] \crefname{Corollary}{Corollary}{Corollaries}
\newtheorem{Lemma}{Lemma}[section] \crefname{Lemma}{Lemma}{Lemmas}
\newtheorem{Definition}{Definition}[section] \crefname{Definition}{Definition}{Definitions}

\DeclareMathOperator*{\cl}{cl}
\DeclareMathOperator*{\bd}{bd}

\title{Fragility of Value under Imperfect Alignment}

\author[1]{Winter Cross}
\author[1,2]{Léo Cymbalista}
\author[1]{Alfred Harwood}
\author[1,2]{Jose Faustino}

\affil[1]{Dovetail Research}
\affil[2]{University of São Paulo}

\date{\today}

\begin{document}

\maketitle

\begin{abstract}
As more responsibility is placed upon AI systems, it becomes increasingly important to guarantee that these systems are aligned with humanity. A common fear in AI safety is that human value is fragile -- that is, optimizing too heavily for an imperfect proxy to human values will lead to a catastrophic outcome. In this paper, we present a model of the alignment problem where an agent undergoes idealized alignment training that guarantees its value function satisfies a proxy condition before optimizing the world. Our primary results identify conditions on the human value function and the accuracy of several proxy conditions under which an agent with an $\eta$-catastrophic value function, one that is guaranteed to take the expectation of human value below $\eta$ in the limit of optimizing power, would be deployed. Our results highlight the danger of overoptimization and motivate AI designs that limit optimization pressure, such as quantilizers, rather than relying solely on pre-deployment training.
\end{abstract}

\section{Introduction}\label{sec:introduction}
Up until recently in human history, technological automation has been relegated to the realm of physical labor. But now, we are seeing the beginning of the automation of mental labor through the development of artificial intelligence. As these systems become more autonomous, their ability to impact humanity both positively and negatively will increase greatly. Just like we have medical students take exams to ensure they have the required knowledge to be a doctor, we must have robust alignment techniques to ensure artificial intelligences have the required human values to safely optimize the world.\\

A major challenge we face with developing such techniques comes from the extreme difficulty of specifying what we want. It is not possible to infer preferences from our behavior alone \citep{armstrong2018occam} and in the related context of contracts it is often practically impossible to identify all possible circumstances that could arise \citep{hadfield2019incomplete}. One way we could sidestep this difficulty is by using an imperfect simplified encoding of our values. The hope would be that if this encoding captures enough information about our values then any agent trained on the encoding will be favorable for humanity. Today, many AIs are trained with reinforcement learning from human feedback which effectively provides the AI with a partial encoding of human values gathered through human rankings of various responses generated by the AI.\\

However, this method requires great care as differences between an agent's values and our own can lead to undesirable actions on the part of the agent. One only need look at the numerous documented cases of negative side effects and reward hacking where a misspecification in the reward function causes the agent to perform unexpected behavior \citep{lehman2020surprising} even under the supervision of an aligned model \citep{baker2025monitoring}.\\

In this paper, we introduce a theoretical model of the alignment problem including novel definitions of optimization and catastrophic outcomes. In three frameworks that differ in the worlds they model and the alignment techniques used, we find necessary and sufficient conditions on the true value function and the accuracy of the alignment technique under which an agent with a catastrophic value function can slip through the cracks. We hope that a deeper understanding of the mechanisms behind value fragility will aid in the design of more robust alignment techniques and safer AI architectures.

\subsection{Section Rundown}
In \cref{sec:alignmentScenario} we describe a scenario concerning a group of researchers who are training an agent to be aligned before deploying it to optimize the world, along with definitions for optimization and catastrophic value functions. We further instantiate this scenario with three frameworks in \cref{sec:finiteFramework}, \cref{sec:continuousFramework}, and \cref{sec:attributesFramework} which differ in the worlds they model and the alignment techniques used by the researchers. Each framework section is independent of the others, but \cref{sec:alignmentScenario} pertains to all of them.\\

The framework discussed in \cref{sec:finiteFramework} models the world as finite, and the training process as bounding the disagreement rate between the agent and human value functions. The continuous framework in \cref{sec:continuousFramework} models the world as bounded and continuous, and the training process as bounding the disagreement rate between the agent and human value functions up to a given tolerance. The attributes framework in \cref{sec:attributesFramework} models a wide range of worlds and understands the human value functions as being made up of a finite set of attributes. The training process guarantees that the agent's value function will be sensitive to all the same attributes.

\section{Related Work} \label{sec:relatedWork}
A core argument that AI is dangerous relies on the assumption that human value is fragile. That is, if you take an accurate description of human values, modify it, and then optimize the world for the modified values, then the resulting world will not be valuable according to the original values. \citet{yudkowsky2009value} coined this use of the term ‘fragile’ and provided several examples of slightly modified versions of human values that result in undesirable worlds. This is an especially concerning idea since human values are difficult to specify. \citet{armstrong2018occam} show that the preferences of irrational agents cannot be deduced from their behavior alone and while \citet{silver2021reward} argue that human preferences can be fully described simply by the maximization of a reward function, newer research challenges this claim under specific formalizations \citep{skalse2023reward, pitis2023consistent}.\\

The notion that optimizing for a proxy can lead to undesirable outcomes is often called ``Goodhart's law.'' Variants of Goodhart's law have been described by \citet{manheim2018categorizing}. \citet{kwa2024catastrophic} find conditions under which catastrophic Goodhart is likely to occur in the context of RLHF and \citet{gao2023scaling} find empirical scaling laws of the regret due to Goodhart's law in GPT-3 language models.\\

Past work has been done to model value fragility. \citet{skalse2022defining} present a formal definition of reward hacking in the context of Markov decision processes and find that essentially no unhackable pairs of reward functions exist. \citet{everitt2017reinforcement} formalize the notion of corruption in reward signals to policies and show that the true reward function is unlearnable. Mirroring \citet{yudkowsky2009value}, \citet{zhuang2020consequences} model human value as being made up of attributes and they find conditions under which all proxies that are missing one attribute are catastrophic. \citet{neth2026against} reviewed the result finding the work insightful, but argued that the model is limited in the types of proxies it considers. In addition to the ``subset proxies'' they modeled, Neth suggests proxies that are sensitive to every attribute but which are simpler than the human value function are equally valid and worth studying. In \cref{sec:attributesFramework}, we present a similar model to that of Zhuang and find a condition for the existence of catastrophic proxies that depend on every attribute.\\

Alternative artificial intelligence designs to utility maximizers have been proposed to circumvent the negative effects of Goodhart's law. For example, \citet{taylor2016quantilizers} introduces `$q$-quantilizers' as selecting an action from the top $q$ proportion of actions, allowing them to be capable without necessarily overoptimizing.

\section{Alignment Scenario} \label{sec:alignmentScenario}
In this section, we describe a scenario relating to the alignment problem along with some mathematical notation and definitions. Each of our three frameworks instantiates this scenario with specific worlds and alignment techniques.\\

There is a group of human researchers who are training a powerful agent. Their goal is to have the agent optimize the world for human values. Let $X$ denote the set of states of the world and let the human value function $f : X \rightarrow A$ denote the value assigned to each world state by the humans where $A \subseteq \mathbb R$ is an interval.\\

The training process results in the agent having its own value function $g : X \rightarrow A$. The researchers cannot necessarily guarantee that $f$ and $g$ are exactly equal, but they do have the ability to make sure that $g$ fits some criteria that makes it similar to $f$. We call these criteria an ``alignment technique'' or a ``proxy condition.'' Any value function that meets the proxy condition is called a ``proxy'' for $f$. If the agent's value function is found to meet the proxy condition, then the researchers consider the agent safe and it is deployed to optimize the world. We equip $X$ with a standard Borel space and say that all value functions are Borel measurable. This is not a heavy restriction since any value function resulting from a computational process will be Borel measurable.\\

Let $\Omega_k(g,p)$ denote the probability distribution induced by the optimization of the agent where $g$ is the agent's value function, $p$ is an initial distribution with full support over the states, and $k$ is a nonnegative real number that represents the optimizing power of the agent. As the agent becomes more powerful, perhaps through recursive self-improvement, the value of $k$ will increase and $\Omega_k(g,p)$ will return a distribution that is more and more concentrated around high-value states. There is more than one way to concentrate probability so to capture a generalized version of optimization, we give the definition of an optimizer $\Omega$ below.

\begin{Definition}\label{def:optimizer}
An \textbf{Optimizer} is a map $\Omega : \mathbb R_{\geq 0} \times \mathcal M(X,A) \times \Delta(X) \rightarrow \Delta(X)$ that sends the optimizing power $k \in \mathbb R_{\geq 0}$, a Borel measurable target value function $g : X \rightarrow A$, and an initial prior over the world states $p \in \Delta(X)$ to a probability distribution over the set of world states that satisfies the following properties:
\begin{enumerate}
    \item With no optimizing power, the distribution is unchanged: \\
    $\Omega_0(g,p) = p$.
    \item The expected value of the target is monotonically increasing with optimizing power: \\
    $j < k \rightarrow \mathbb E_{\Omega_{j}(g,p)}[g(x)] \leq \mathbb E_{\Omega_{k}(g,p)}[g(x)]$.
    \item As optimizing power increases to infinity, probability becomes concentrated in areas where the target is large:\\
    $\forall M < \sup_{y \in X} g(y), \ \lim_{k \rightarrow \infty} \Pr_{\Omega_k(g,p)}[g(x) > M] = 1$.
    \item As optimizing power increases to infinity, the expectation of the target approaches its highest value within the world states:\\
    $\lim_{k \rightarrow \infty} \mathbb E_{\Omega_k(g,p)}[g(x)] = \sup_{y \in X} g(y)$.
\end{enumerate}
\end{Definition}

Properties 3 and 4 in the definition above are similar, but not equivalent in general. Removing either of them results in optimizers that don't quite match our intuition. Without property 3, in the case that $A$ is unbounded above, one can construct an optimizer that centers the vast majority of the probability weight on worthless world states and puts a tiny probability onto more and more valuable states to have the expectation increase. Whether or not this is desirable is contentious philosophically; \citet{wilkinson2022defense} argues that a coherent value system requires us to accept such gambles, but \citet{bottomley2025offense} present a direct rebuttal of Wilkinson's argument. Similarly, removing property 4, in the case that $A$ is unbounded below, one can construct an optimizer that centers the vast majority of the probability weight on high-value world states and puts a tiny probability onto less and less valuable states to have the probability property met while having the expectation decrease indefinitely.\\ 

For bounded value functions, which all our results apply to, these unintuitive optimizers do not exist. In fact, properties 3 and 4 are equivalent for bounded value functions. We provide a proof below of the equivalence, so it will suffice to only consider property 3 in the definition above for all later arguments.

\begin{Proposition}
Define $g_{\max} = \sup_{x \in X} g(x)$ and $g_{\min} = \inf_{x \in X} g(x)$. Then, properties 3 and 4 of \cref{def:optimizer} are equivalent.
\end{Proposition}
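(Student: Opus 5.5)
The plan is to prove the two implications separately, using only that $g$ is bounded, so that $g_{\min}$ and $g_{\max}$ are finite and $g_{\min} \le g(x) \le g_{\max}$ for every $x$. If $g_{\min} = g_{\max}$ then $g$ is constant and both properties hold trivially: property 3 is vacuous because no $M < g_{\max}$ has $\Pr[g > M] \neq 1$, and property 4 is immediate. So from now on assume $g_{\min} < g_{\max}$. Write $\mu_k = \Omega_k(g,p)$.

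\emph{Property 3 implies property 4.} Fix $\varepsilon > 0$ and set $M = g_{\max} - \varepsilon$.
\begin{enumerate}
\item On the event $\{g > M\}$ we have $g \ge M$. On its complement we have $g \ge g_{\min}$. Hence
\[
\mathbb E_{\mu_k}[g] \ge M \Pr_{\mu_k}[g > M] + g_{\min}\bigl(1 - \Pr_{\mu_k}[g > M]\bigr).
\]
\item By property 3 the right-hand side tends to $M = g_{\max} - \varepsilon$. So $\liminf_k \mathbb E_{\mu_k}[g] \ge g_{\max} - \varepsilon$.
\item Since $g \le g_{\max}$ pointwise, $\mathbb E_{\mu_k}[g] \le g_{\max}$.
\end{enumerate}
Letting $\varepsilon \to 0$ gives the limit in property 4.

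\emph{Property 4 implies property 3.} This is a Markov-type inequality applied to the nonnegative variable $g_{\max} - g$. Fix $M < g_{\max}$ and set $q_k = \Pr_{\mu_k}[g \le M]$. On $\{g \le M\}$ we have $g_{\max} - g \ge g_{\max} - M > 0$. Since $g_{\max} - g \ge 0$ everywhere,
\[
g_{\max} - \mathbb E_{\mu_k}[g] = \mathbb E_{\mu_k}[g_{\max} - g] \ge (g_{\max} - M)\, q_k .
\]
Property 4 makes the left-hand side tend to $0$, so $q_k \to 0$. Therefore $\Pr_{\mu_k}[g > M] \to 1$.

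The only delicate point is boundedness, and I would make its role explicit. In the first direction, the lower bound $g_{\min} > -\infty$ is what keeps the contribution of the small-probability complement event controlled. In the second direction, $g_{\max} < \infty$ is needed for $g_{\max} - g$ to be a genuine nonnegative integrable variable. These are exactly the two places where the counterexamples described before the proposition break the equivalence when $A$ is unbounded. Beyond this I expect no real obstacle. Measurability of $\{g > M\}$ is ensured by the standing assumption that value functions are Borel measurable.
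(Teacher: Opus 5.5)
Your proof is correct and follows the paper's argument. For 3$\Rightarrow$4 you use the same split of the expectation over $\{g>M\}$ and $\{g\le M\}$ and the same lower bound $M\Pr[g>M]+g_{\min}\Pr[g\le M]$; for 4$\Rightarrow$3, your Markov-type inequality $g_{\max}-\mathbb E[g]\ge (g_{\max}-M)\Pr[g\le M]$ is the paper's bound $\mathbb E[g]\le M+(g_{\max}-M)\Pr[g>M]$ rearranged, and it lets you read off $\Pr[g\le M]\to 0$ directly rather than through the paper's limit-inferior bookkeeping.
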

\begin{proof}
(3 implies 4) Take any $M < g_{\max}$. We have
\begin{align*}
\liminf_{k \rightarrow \infty} \mathbb E_{\Omega_k(g,p)}[g(x)] &= \liminf_{k \rightarrow \infty} \left[ \int_{x \in X : g(x) > M} g(x) \mathrm d \Omega_k(g,p)(x) + \int_{x \in X : g(x) \leq M} g(x) \mathrm d \Omega_k(g,p)(x) \right]\\
&\geq \liminf_{k \rightarrow \infty} \left[ M \Pr_{\Omega_k(g,p)}[g(x) > M] \quad\quad\quad\quad + g_{\min} \Pr_{\Omega_k(g,p)}[g(x) \leq M] \right] = M.
\end{align*}

If the limit inferior of the expectation is greater than $M$ for all $M < g_{\max}$, then the limit must exist and be equal to $g_{\max}$.\\

(4 implies 3) Take any $M < g_{\max}$. We have
\begin{align*}
\mathbb E_{\Omega_k(g,p)}[g(x)] &= \int_{x \in X : g(x) > M} g(x) \mathrm d \Omega_k(g,p)(x) + \int_{x \in X : g(x) \leq M} g(x) \mathrm d \Omega_k(g,p)(x)\\\\
&\leq g_{\max} \Pr_{\Omega_k(g,p)}[g(x) > M] \quad\quad\quad\ + M \Pr_{\Omega_k(g,p)}[g(x) \leq M]
\end{align*}

which reduces to 
$$\mathbb E_{\Omega_k(g,p)}[g(x)] \leq M + (g_{\max} - M) \Pr_{\Omega_k(g,p)}[g(x) > M] \leq g_{\max}.$$

Taking the limit inferior of both sides, we get
$$\liminf_{k \rightarrow \infty} \mathbb E_{\Omega_k(g,p)}[g(x)] = g_{\max} \leq \liminf_{k \rightarrow \infty} \left[ M + (g_{\max} - M) \Pr_{\Omega_k(g,p)}[g(x) > M] \right] \leq g_{\max}.$$

Therefore, the limit inferior is exactly equal to $g_{\max}$ which gives
$$\liminf_{k \rightarrow \infty} \Pr_{\Omega_k(g,p)}[g(x)> M] = 1.$$

If the limit inferior of the probability is 1, then the limit exists and is equal to 1.
\end{proof}

One example of an optimizer is the \textbf{Boltzmann optimizer}. If $X$ is a finite set, then the Boltzmann optimizer returns a probability distribution given by
$$\Omega_k(g, p)(x) = \frac{p(x)e^{kg(x)}}{\sum_{y \in X} p(y)e^{kg(y)}}$$

which smoothly transfers more and more probability into higher-valued states and has full support for all values of $k$. \cref{fig:boltzmannFiniteDiagram} shows the change in distribution induced by the Boltzmann optimizer. Another example of an optimizer is one that, for the target function $g$, immediately places all probability weight on one maximizer of $g$, $x^*$ so that $\Omega_k(g,p)(x^*) = 1$ for $k > 0$.\\

\begin{figure}
    \centering
    \includegraphics[width=1\linewidth]{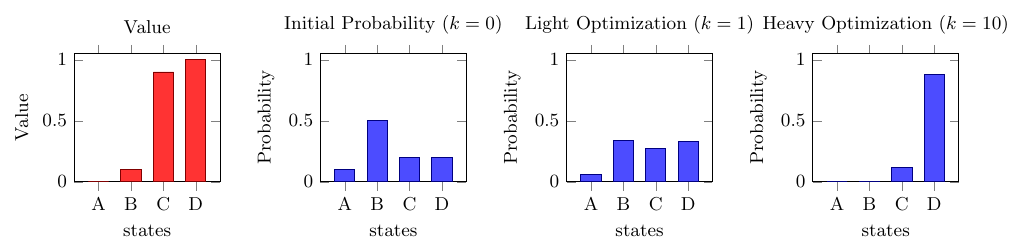}
    \caption{A sequence of graphs showing a distribution induced by a Boltzmann Optimizer. \textbf{Graph 1:} A graph displaying the value of four states A, B, C, and D with respect to the target value function. \textbf{Graph 2:} A graph displaying the initial probability distribution which is also the distribution induced when optimizing power is zero. \textbf{Graph 3:} A graph displaying the induced distribution when optimizing power is 1. Higher-valued states C and D see an increase in probability while lower-valued states see a decrease. \textbf{Graph 4:} A graph displaying the induced distribution when optimizing power is 10. Now that optimizing power is large enough, the only state that is seeing increases in probability is the highest-valued state D which has become much more likely than the other states.}
    \label{fig:boltzmannFiniteDiagram}
\end{figure} 

Finally, we need a definition to classify agent value functions as bad or not bad to optimize for. We provide a definition of catastrophic and partially catastrophic value functions below.

\begin{Definition}\label{def:etaCatastrophic}
Let $\eta$ be a real number such that $\inf_{y \in X} f(y) \leq \eta < \sup_{y \in X} f(y)$. A value function $g$ is \textbf{$\bm \eta$-catastrophic} for a value function $f$ when for all initial priors $p$ and for all optimizers $\Omega$,
$$\limsup_{k \rightarrow \infty} \mathbb E_{\Omega_k(g,p)}[f(x)] \leq \eta.$$

In the extreme case that $g$ is $\inf_{y \in X} f(y)$-catastrophic for $f$, we say that $g$ is \textbf{catastrophic} for $f$.
\end{Definition}

If $g$ is catastrophic for $f$, the limit superior reduces to a limit and we have the simpler condition below.

\begin{Corollary}\label{cor:catastrophicReducedDefinition}
A value function $g$ is catastrophic for $f$ if and only if for all initial priors $p$ and for all optimizers $\Omega$,
$$\lim_{k \rightarrow \infty} \mathbb E_{\Omega_k(g,p)}[f(x)] = \inf_{y \in X} f(y).$$
\end{Corollary}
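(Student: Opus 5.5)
The plan is to prove both directions directly from \cref{def:etaCatastrophic} with $\eta = \inf_{y \in X} f(y)$. The only extra ingredient is a trivial lower bound on the expectation of $f$ under any probability distribution. Write $f_{\min} = \inf_{y \in X} f(y)$. In our setting value functions are bounded, so $f_{\min}$ is finite and $\mathbb E_{\Omega_k(g,p)}[f(x)]$ is well defined.

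For the forward direction, suppose $g$ is catastrophic for $f$, and fix any prior $p$ and optimizer $\Omega$. By definition,
$$\limsup_{k \rightarrow \infty} \mathbb E_{\Omega_k(g,p)}[f(x)] \leq f_{\min}.$$
On the other hand, $f(x) \geq f_{\min}$ pointwise, and $\Omega_k(g,p)$ is a probability measure. Hence $\mathbb E_{\Omega_k(g,p)}[f(x)] \geq f_{\min}$ for every $k$, which gives
$$\liminf_{k \rightarrow \infty} \mathbb E_{\Omega_k(g,p)}[f(x)] \geq f_{\min}.$$
Combining the two bounds, the limit inferior and limit superior both equal $f_{\min}$, so the limit exists and equals $\inf_{y \in X} f(y)$.

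For the reverse direction, suppose that for all $p$ and $\Omega$ the limit exists and equals $f_{\min}$. The limit superior then also equals $f_{\min}$, and in particular it is $\leq f_{\min}$. This is exactly the $f_{\min}$-catastrophic condition, i.e.\ $g$ is catastrophic for $f$.

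There is no real obstacle here. The only points to check are that $\eta = f_{\min}$ is an admissible choice in \cref{def:etaCatastrophic}, which needs $f_{\min} < \sup f$, i.e.\ $f$ nonconstant, as the definition already presupposes; and that the expectation is finite. The latter holds because value functions are Borel measurable and bounded, so integrating the pointwise inequality $f \geq f_{\min}$ against a probability measure is legitimate.
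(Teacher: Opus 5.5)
Your proof is correct and is essentially the paper's reasoning. The paper gives no formal proof, only the remark that for $\eta = \inf_{y \in X} f(y)$ the limit superior reduces to a limit; your observation that $\mathbb E_{\Omega_k(g,p)}[f(x)] \geq \inf_{y \in X} f(y)$ for every $k$, so the limit inferior meets the limit superior, is exactly that justification, and your caveats on boundedness and nonconstant $f$ make the paper's implicit assumptions explicit.
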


The quantity $\eta$ represents an upper bound on the expectation of human value in the limit of optimization. In other words, $\sup_{x \in X} f(x) - \eta$ represents the maximum regret for an $\eta$-catastrophic value function. \cref{def:etaCatastrophic} requires that human value be taken below $\eta$ for all optimizers, making this a strong definition. Value functions can exist that take human value below $\eta$ for some optimizers but increase human value for others; such functions do not satisfy the definition.\\

Smaller $\eta$-values represent more catastrophic outcomes (i.e. higher regret on the part of the humans). If $\eta_1 < \eta_2$ and $g$ is $\eta_1$-catastrophic then $g$ is also $\eta_2$-catastrophic. This notion of partial catastrophe is only meaningful for $\eta$ in the range $[\inf_{x \in X} f(x), \sup_{x \in X} f(x))$; if $\eta$ is smaller than the infimum, no function is $\eta$-catastrophic and conversely, if $\eta$ is greater than or equal to the supremum, every function including $f$ itself is $\eta$-catastrophic.\\

The hope of the researchers is that their proxy condition is robust enough to filter out all catastrophic and partially catastrophic outcomes. As we will see, even strong alignment guarantees applying to the entire state space admit catastrophic and partially catastrophic proxies.

\section{Finite Framework} \label{sec:finiteFramework}
This framework models a finite world where the training process guarantees that the disagreement rate between the agent and human value functions is below a given value. We find how strict the proxy condition must be to filter out catastrophic value functions.

\subsection{Instantiation}
Let $X$ be a finite set. Value functions are maps from the world states to values in the interval $A = [0,1]$. Let $f$ denote the human value function and $g$ the agent's. In this framework, the agent's and human's values exist on a calibrated scale where 0 represents the worst imaginable outcome and 1 the best. There need not be states where either function achieves 0 or 1.\\

The researchers have designed the training process so that the disagreement rate between $f(x)$ and $g(x)$ when $x$ is pulled from a uniform distribution is at most $\beta \in (0,1)$. This forms the proxy condition for this framework which induces the definition of an F-proxy given below.

\begin{Definition}\label{def:finiteProxy}
A function $g : X \rightarrow [0,1]$ is an \textbf{F-proxy} for $f$ when 
$$\Pr_{x \sim \mathcal U(X)}[g(x) \neq f(x)] \leq \beta$$

where $\mathcal U(X)$ represents the uniform distribution over $X$.
\end{Definition}

With every part of the scenario instantiated, we now ask when are $\eta$-catastrophic outcomes possible? This is the same as asking for what $\beta$ and value functions $f$ does there exist a function that is both an F-proxy and $\eta$-catastrophic.

\subsection{Existence of Catastrophic Proxies}
We provide two results for different types of human value functions. The proofs are deferred to the appendix in \cref{sec:badProxyExistsFiniteAProof}. First consider a function $f$ that does not assign the value $1$ to any state. We can understand this as the humans not valuing any state as perfect. In this case, we have the following result.

\begin{restatable}{Theorem}{badProxyExistsFiniteA}\label{thm:badProxyExistsFiniteA}
Let $f$ be nonconstant and have the property that $\forall x \in X, f(x) < 1$. There exists a catastrophic F-proxy for $f$ if and only if $\beta \geq 1/|X|$.
\end{restatable}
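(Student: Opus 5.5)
We prove the two directions separately. Throughout, write $f_{\min} = \min_{x \in X} f(x)$ and $L = \{x \in X : f(x) = f_{\min}\}$. Since $f$ is nonconstant, $L \neq X$. Since $X$ is finite, the F-proxy condition says exactly that $g$ differs from $f$ on at most $\beta|X|$ states.

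For the ``only if'' direction, suppose $\beta < 1/|X|$. Then $\beta|X| < 1$, so any F-proxy $g$ must agree with $f$ everywhere, i.e.\ $g = f$. It then suffices to exhibit one optimizer under which $\mathbb E_{\Omega_k(f,p)}[f(x)]$ does not tend to $f_{\min}$. In fact property 4 of \cref{def:optimizer} gives this for every optimizer: the expectation tends to $\sup f = \max f > f_{\min}$, because $f$ is nonconstant. Hence $f$ is not catastrophic for itself, by \cref{cor:catastrophicReducedDefinition}, and no catastrophic F-proxy exists.

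For the ``if'' direction, suppose $\beta \geq 1/|X|$. Then we may change $f$ at exactly one state. Pick any $x_0 \in L$ and define $g(x_0) = 1$ and $g(x) = f(x)$ for $x \neq x_0$. This $g$ is an F-proxy, since it disagrees with $f$ on exactly one state out of $|X|$. Because $f(x) < 1$ for every $x$, we have $g(x) < 1$ for all $x \neq x_0$. Let $M = \max_{x \neq x_0} g(x)$; this is a maximum over a finite set, so $M < 1 = \sup g$. (If $|X|=1$, then $f$ would be constant, so $X \setminus \{x_0\}$ is nonempty.)

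Property 3 of \cref{def:optimizer} then gives $\Pr_{\Omega_k(g,p)}[g(x) > M] \to 1$ for every prior $p$ and every optimizer $\Omega$. The only state with $g(x) > M$ is $x_0$, so $\Omega_k(g,p)(\{x_0\}) \to 1$. Since $f$ is bounded in $[0,1]$,
\[
\left|\mathbb E_{\Omega_k(g,p)}[f(x)] - f(x_0)\right| \leq \Pr_{\Omega_k(g,p)}[x \neq x_0] \to 0,
\]
so the expectation tends to $f(x_0) = f_{\min}$. By \cref{cor:catastrophicReducedDefinition}, $g$ is catastrophic.

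The only delicate point is the hypothesis $f < 1$. It is what makes the modified state a strict unique maximizer of $g$, so that property 3 forces all the mass onto $x_0$ regardless of which optimizer is used. Without it, ties at value $1$ would let an optimizer keep mass on good states. Finiteness of $X$ is what guarantees the gap $M < 1$.
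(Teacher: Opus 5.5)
Your proof is correct and follows essentially the same route as the paper. For the forward direction, $\beta < 1/|X|$ forces $g = f$, and $f$ is not catastrophic for itself. For the converse, you use the one-state modification setting $g = 1$ at a minimizer of $f$. Where the paper simply asserts that a unique maximizer draws all the mass, you justify it via property 3 using the finite gap $M = \max_{x \neq x_0} g(x) < 1$.
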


The above result tells us how strict the researchers must make the proxy condition in order to filter out catastrophic functions. Specifically, $\beta$ must be less than $1/|X|$ which is so strict that the only function that satisfies the proxy condition for $f$ would be $f$ itself. Thus, any misspecification will admit a catastrophic (and therefore an $\eta$-catastrophic) F-proxy.\\

Alternatively, humans could value some states as perfect. In the case that $f$ assigns at least three different values over $X$, we have the following theorem.

\begin{restatable}{Theorem}{badProxyExistsFiniteB}\label{thm:badProxyExistsFiniteB}
Define $X_{f=1} = \{x \in X : f(x) = 1\}$. Let $f$ take at least three distinct values over $X$.\\ 
There exists a catastrophic F-proxy for $f$ if and only if $\beta \geq \frac{|X_{f=1}|+1}{|X|}$.
\end{restatable}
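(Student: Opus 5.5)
The plan is to reduce catastrophe in the finite setting to a purely combinatorial condition on the argmax set of $g$, and then count how many points must be changed.

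\textbf{Step 1 (characterization).} Since $X$ is finite, I claim: $g$ is catastrophic for $f$ if and only if every $x \in \arg\max_X g$ satisfies $f(x) = f_{\min} := \min_X f$.

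For sufficiency, choose $M$ strictly between $\max g$ and the second-largest value of $g$; if $g$ is constant, $\arg\max g = X$. Property 3 of \cref{def:optimizer} then gives $\Pr_{\Omega_k(g,p)}[x \in \arg\max g] \to 1$. Because $f$ is bounded, it follows that $\mathbb E_{\Omega_k(g,p)}[f] \to f_{\min}$, for every optimizer and every prior; \cref{cor:catastrophicReducedDefinition} then says $g$ is catastrophic.

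For necessity, suppose some $x^* \in \arg\max g$ has $f(x^*) > f_{\min}$. Use the optimizer from the paper's example, which sets $\Omega_0(g,p) = p$ and puts all mass on a chosen maximizer $x^*$ for $k>0$. I will check that it satisfies properties 1--3. Under it, $\mathbb E[f] \to f(x^*) > f_{\min}$, so $g$ is not catastrophic.

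\textbf{Step 2 (sufficiency of $\beta \ge (|X_{f=1}|+1)/|X|$).} Fix $x_0$ with $f(x_0)=f_{\min}$. Since $f$ takes at least three values, $f_{\min} < 1$. Define $g(x_0) = 1$, $g(x) = f_{\min}$ for $x \in X_{f=1}$, and $g = f$ elsewhere.

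Every $x \neq x_0$ then has $g(x) < 1$. This holds because points outside $X_{f=1}$ have $f(x)<1$, and points in $X_{f=1}$ are sent to $f_{\min} < 1$. So $\arg\max g = \{x_0\}$, and by Step 1 $g$ is catastrophic. The disagreement count is exactly $|X_{f=1}|+1$, so $g$ is an F-proxy. If $X_{f=1}=\emptyset$, this is just the one-point modification, consistent with \cref{thm:badProxyExistsFiniteA}.

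\textbf{Step 3 (necessity).} Let $g$ be a catastrophic F-proxy and let $D=\{x : g(x)\neq f(x)\}$.

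First, $X_{f=1} \subseteq D$. Indeed, if $g(x)=f(x)=1$ for some $x$, then $x \in \arg\max g$ since values lie in $[0,1]$, yet $f(x) = 1 > f_{\min}$, contradicting Step 1.

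Next, suppose $D = X_{f=1}$. By the three-values hypothesis there is a point $y$ with $f_{\min} < f(y) < 1$. This point is unchanged, so $g(y)=f(y)$. Every point $z$ with $f(z)=f_{\min}$ is also unchanged, since $f_{\min}<1$ puts it outside $X_{f=1}$; hence $g(z) = f_{\min} < g(y)$. So no minimizer of $f$ attains $\max g$. Yet $\arg\max g$ is nonempty, so it contains a point with $f > f_{\min}$, contradicting Step 1.

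Hence $|D| \ge |X_{f=1}|+1$, giving $\beta \ge |D|/|X| \ge (|X_{f=1}|+1)/|X|$.

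\textbf{Main obstacle.} The step needing most care is the necessity direction of Step 1. There I must verify that the point-mass optimizer genuinely satisfies \cref{def:optimizer}: monotonicity holds because the expectation jumps from $\mathbb E_p[g]$ to $\max g$, and property 3 holds trivially. I also need to note that "for all optimizers" lets us pick this adversarial one, which is what forces every maximizer of $g$ to be an $f$-minimizer. The counting in Step 3 also hinges on using the three-distinct-values hypothesis to produce the intermediate point $y$; without it, changing only $X_{f=1}$ could suffice.
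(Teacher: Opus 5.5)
Your proof is correct and follows essentially the same route as the paper. The necessity half of your Step 1 is the paper's \cref{lem:finiteLemma} (a point-mass optimizer on a maximizer of $g$), your construction matches the paper's up to sending $X_{f=1}$ to $f_{\min}$ rather than $0$, and your counting argument ($X_{f=1}\subseteq D$, then $D\neq X_{f=1}$ via a point of intermediate $f$-value) mirrors the paper's contradiction argument, while your sufficiency half (taking $M$ between the two largest values of $g$ and applying property 3) gives a more careful justification of the paper's remark that all probability weight goes to the unique maximizer.
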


Here, we see that the more states the humans value as perfect, the less strict the researchers must be to filter out catastrophic functions. This makes intuitive sense since humans will be generally less picky about the state they end up in after optimization the more states they value highly. The same effect plays out when $f$ takes only two values except some $f$ admit catastrophic proxies when $\beta = \frac{|X_{f=1}|}{|X|}$.

\subsection{Discussion}
The value $\beta$ in the proxy condition can be understood as quantifying the amount of misspecification in the researchers' representation of human values with $\beta=0$ representing a perfect specification. \cref{thm:badProxyExistsFiniteA} paints a bleak picture that suggests any nontrivial amount of misspecification will fail to filter out all catastrophic value functions. \cref{thm:badProxyExistsFiniteB} supports a slightly more optimistic view that suggests the more states that humans value as perfect, the more misspecification is acceptable. It's worth noting however that a function that is not partially catastrophic can still lead to outcomes where optimization takes human value below its maximum depending on the optimizer.\\

This framework serves as an introduction to our methods. The continuous framework in \cref{sec:continuousFramework} shares some similarities with this framework while being more general in a few ways including allowing nonuniform sampling distributions and having a more general proxy condition.\\

As part of the definition of value functions in this framework, we stated that the agent and human value functions exist on the same calibrated scale. This is to allow us to directly compare two value functions since, in general, value functions are only equivalent up to von Neumann-Morgenstern equivalence classes. In \cref{sec:continuousFramework}, the value functions are defined exclusively over the world states and we take a canonical representative from each equivalence class.

\subsection{Example}\label{sec:finiteExample}
An agent has been put in charge of the interlockings at a railroad. Consider the situation pictured in \cref{fig:finiteExampleTrains} where two trains are coming to a junction with two switches.\\

\begin{figure}
    \centering
    \includegraphics[width=0.5\linewidth]{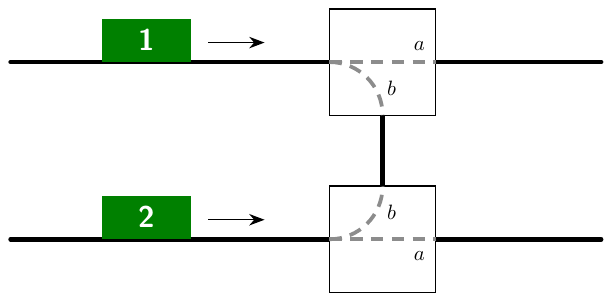}
    \caption{A diagram showing a railroad junction. The two trains are traveling to the right and the agent can control the two switches to alter their courses. There are four states the agent can leave the railroad in: both switches set to straight (aa), train 2 made to turn only (ab), train 1 made to turn only (ba), and both trains made to turn (bb).}
    \label{fig:finiteExampleTrains}
\end{figure}

The humans would most prefer that both trains be allowed to continue straight (aa). Unnecessary turns will slow down the trains and so are less valuable (ab and ba). If both trains turn, this will result in a crash which the humans assign a very low value (bb). If the proxy condition for this agent has $\beta \geq 1/4$ then it's possible that the agent values the crash state very highly, making it catastrophic. Given below is a table of values for each state with respect to the human value function and a catastrophic F-proxy.\\

\begin{center}
\begin{tabular}{||c | c | c||}
    \hline
    State & Human Value & Agent Value (Catastrophic) \\
    \hline
     aa & 0.9 & 0.9 \\
     \hline
     ab & 0.7 & 0.7 \\
     \hline
     ba & 0.4 & 0.4 \\
     \hline
     bb & 0.0 & 1.0 \\
     \hline
\end{tabular}
\end{center}

For a larger number of switches, the number of configurations increases exponentially and the needed strictness of the test to filter out catastrophic preferences becomes untenable.

\section{Continuous Framework} \label{sec:continuousFramework}
This framework models a continuous and bounded world where the training process guarantees that the disagreement rate between the agent and the human value functions for differences of more than an acceptable tolerance is below a given value. We find that regardless of the strictness of the proxy condition, catastrophic proxies will always exist.

\subsection{Instantiation}
Let $X = [0,1]^n$ where $n$ is a positive integer that can be thought of as the number of degrees of freedom in the world. Notably, the set $[0,1]^n$ captures a world defined by the position of a fixed number of atoms in a bounded space, making it a natural choice to model the world.\\

Value functions are modeled as continuous maps from the world states to the bounded interval $A = [0,1]$. Additionally, we require that value functions achieve both the maximum value of 1 and minimum value of 0 for some states; a canonical representative with this property exists in each nonconstant vNM equivalence class of value functions. For brevity, let $\mathcal V$ denote the set of allowed value functions: $\mathcal V = \{g | g : X \rightarrow[0,1], g \text{ is continuous and surjective} \}.$\\

The training process has been designed so that the probability of the human and agent's value functions disagreeing by more than $\alpha \in (0,1)$ is less than $\beta \in (0,1)$ over a sampling distribution represented by a probability density function $p_S$. This forms the proxy condition for this framework which induces the following definition of a C-proxy.

\begin{Definition}\label{def:continuousProxy}
A function $g \in \mathcal V$ is a \textbf{C-proxy} for $f \in \mathcal V$ when
$$\Pr_{x \sim p_S}[|g(x) - f(x)| > \alpha] < \beta.$$
\end{Definition}

With every part of the scenario instantiated, we now ask when are $\eta$-catastrophic outcomes possible?

\subsection{Existence of Catastrophic Proxies}
We have the following result that applies to any sampling probability density function $p_S$.

\begin{restatable}{Theorem}{badProxyExistsContinuous}\label{thm:badProxyExistsContinuous}
For all $f \in \mathcal{V}$, for all $0<\alpha, \beta < 1$, there exists a catastrophic C-proxy for $f$.
\end{restatable}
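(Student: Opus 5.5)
The plan is to build the proxy $g$ by hand: take a slightly lowered copy of $f$, and add a narrow spike reaching the value $1$ at a point where $f$ attains its minimum $0$. The spike should be supported on a set of tiny $p_S$-measure. Then $g$ disagrees with $f$ by more than $\alpha$ only on a set of $p_S$-probability less than $\beta$. Meanwhile the only states where $g$ is near its supremum are states where $f$ is near $0$. Property 3 of \cref{def:optimizer} will then force every optimizer to drive $\mathbb E[f]$ to $0$.

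\textbf{Construction.} Since $f\in\mathcal V$ is surjective, pick $x_0\in X$ with $f(x_0)=0$. Because $p_S$ is a density, the mass $\int_{B(x_0,r)\cap X}p_S$ tends to $0$ as $r\to 0$. Fix $r>0$ with this mass less than $\beta$. Define $h(x)=\max\{0,f(x)-\alpha/2\}$, so that $|h-f|\le \alpha/2$ everywhere and $h\le 1-\alpha/2$. Define the tent $\varphi(x)=\max\{0,1-\|x-x_0\|/r\}$ and set $g=\max\{h,\varphi\}$. Then the following hold.
\begin{enumerate}
\item $g$ is continuous with values in $[0,1]$, and $g(x_0)=1$.
\item $g$ attains $0$, for the following reason. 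The set $\{f<\alpha/2\}$ is an open neighbourhood of $x_0$, and I shrink $r$ if needed so that this neighbourhood contains a point of $X$ outside $B(x_0,r)$. At such a point both $h$ and $\varphi$ vanish. Hence $g$ is surjective and $g\in\mathcal V$.
\item Outside $B(x_0,r)$ we have $g=h$, so $|g-f|\le\alpha/2<\alpha$ there. Therefore $\Pr_{p_S}[|g-f|>\alpha]\le p_S(B(x_0,r))<\beta$, and $g$ is a C-proxy.
\end{enumerate}

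\textbf{Catastrophe.} Fix $\varepsilon>0$. By continuity of $f$ at $x_0$, choose $\delta\in(0,r)$ with $f<\varepsilon$ on $B(x_0,\delta)$. Let $M=\max\{1-\alpha/2,\,1-\delta/r\}<1=\sup g$. Since $h\le 1-\alpha/2\le M$, we get
$$\{g>M\}=\{\varphi>M\}\subseteq B(x_0,\delta).$$
For any prior $p$ and any optimizer $\Omega$ this gives
$$\mathbb E_{\Omega_k(g,p)}[f]\le \varepsilon\Pr[g>M]+\Pr[g\le M]\le \varepsilon+\Pr_{\Omega_k(g,p)}[g\le M].$$
By property 3 the last term tends to $0$, so $\limsup_k \mathbb E[f]\le\varepsilon$. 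Since $\varepsilon$ is arbitrary and $\mathbb E[f]\ge 0=\inf f$, \cref{cor:catastrophicReducedDefinition} shows that $g$ is catastrophic.

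\textbf{Main obstacle.} The delicate points are keeping $g$ inside $\mathcal V$ (continuity together with surjectivity onto $[0,1]$) while keeping the spike as the unique region where $g$ exceeds $1-\alpha/2$. Lowering $f$ by $\alpha/2$ and clamping at $0$ resolves both at once: it caps $h$ strictly below $1$, and it produces zeros of $g$ outside the spike. The smallness of $p_S(B(x_0,r))$ uses only absolute continuity of $p_S$ with respect to Lebesgue measure.
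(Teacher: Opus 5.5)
Your proof is correct. It follows the same overall strategy as the paper: perturb $f$ by at most $\alpha/2$ everywhere, place a thin spike reaching $1$ at a minimizer of $f$, use absolute continuity of $p_S$ to hide the spike from the sampling test, and let the optimizer's concentration do the rest. Your construction of $g$, however, is genuinely simpler.

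The paper clamps $f$ on both sides to $[\lambda,1-\lambda]$, so $\bar f$ reaches neither $0$ nor $1$. It therefore has to insert two cones built by radial interpolation toward the sphere of radius $\delta$: an upward one at the minimizer $b$ and a downward one at a second point $a$. It also needs the $\delta_0$ bookkeeping to keep the projected points inside $[0,1]^n$ when $a$ or $b$ lies on the boundary.

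Your one-sided shift $h=\max\{0,f-\alpha/2\}$ keeps the zeros; indeed the whole region $\{f\le\alpha/2\}$ becomes a zero set of $h$. So a single spike suffices, and surjectivity follows from the intermediate value theorem once a zero of $h$ sits outside the tent. Taking $g=\max\{h,\varphi\}$ also gives continuity for free, with no boundary cases.

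The cap $h\le 1-\alpha/2$ buys you the explicit inclusion $\{g>M\}\subseteq B(x_0,\delta)$, so you can invoke property 3 directly. The paper instead goes through \cref{lem:continuousLemma}, which uses property 4, together with the fact that $b$ is the unique maximizer of $g$. That last step implicitly needs a compactness argument to get $\sup_{X\setminus B_{\delta'}(b)} g<1$.

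The one step worth writing out in your item 2 is why the shrinking works. The relatively open set $\{f<\alpha/2\}$ contains some $y\neq x_0$, because $[0,1]^n$ has no isolated points, and you then take $r\le\lVert y-x_0\rVert$. Shrinking $r$ this way does not disturb the bound $p_S(B(x_0,r))<\beta$.
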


The proof is deferred to the appendix in \cref{sec:badProxyExistsContinuousProof}.\\

\textit{Sketch of Proof:}
The rough idea behind the construction of a catastrophic C-proxy $g$ is to alter $f$ by adding a thin spike at a state $b \in X$ that minimizes $f$ (so $f(b) = 0$) to make $g(b) = 1$. With this addition (along with a number of other changes that are omitted from this sketch), the spike can be made thin enough to satisfy the proxy condition and optimizing for $g$ will eventually concentrate probability around the state $b$ which will take the expectation of $f$ to 0, making $g$ a catastrophic proxy.\\

\cref{thm:badProxyExistsContinuous} tells us that regardless of the strength of the proxy condition, no matter how small $\alpha$ and $\beta$, no matter the value function $f$, there will always exist catastrophic proxies (and thus $\eta$-catastrophic proxies as well).

\begin{figure}
    \centering
    \includegraphics[width=0.5\linewidth]{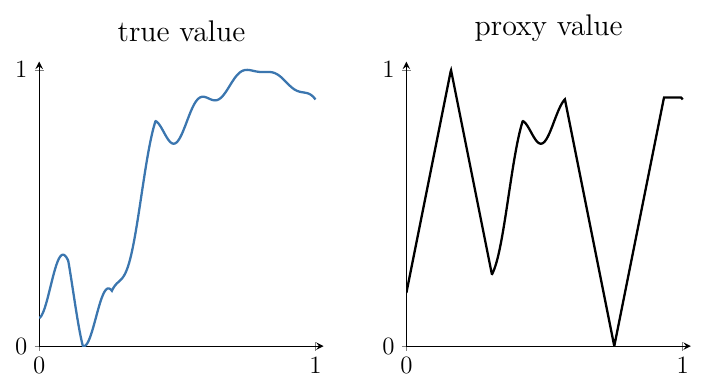}
    \caption{An image showing the graph of two value functions over the set of world states $[0,1]$. \textbf{Left:} A graph displaying the true/human value function graphed as a blue line. \textbf{Right:} A graph displaying the proxy value function graphed as a black line. This proxy is catastrophic since its unique highest-valued state is exactly the lowest-valued state of the true value function.}
    \label{fig:continuousExampleValueFunctions}
\end{figure}

\subsection{Discussion}
The values $\alpha$ and $\beta$ quantify the amount of misspecification in the proxy condition. \cref{thm:badProxyExistsContinuous} tells us that any amount of misspecification, no matter how small, will admit a catastrophic function. This is because, from a human value function $f$, a catastrophic C-proxy can be constructed that disagrees with $f$ only over a very small area around where $f$ achieves its lowest value.\\

While this result suggests that building aligned AI systems safely is effectively impossible, it's worth noting that not all proxies are catastrophic. In fact, the catastrophic C-proxy constructed in the proof of \cref{thm:badProxyExistsContinuous} is not particularly natural. With a strong proxy condition, we expect the most natural proxies will be well-aligned with humanity. Our result says nothing about how likely catastrophic or $\eta$-catastrophic proxies are to occur from the training process -- only that they exist. We discuss this more in-depth in the full-paper discussion in \cref{sec:discussion}.\\

Additionally, \cref{thm:badProxyExistsContinuous} does rely on the fact that $p_S$ is a probability density function. If $f$ has a unique minimizer $b \in X$ and the sampling distribution is a point distribution at $b$, then any proxy must agree with $f$ at $b$, thus no proxies are totally catastrophic regardless of $\beta$. However, in this point mass case, $\eta$-catastrophic proxies for all values of $\eta > 0$ still exist, so it's not an excellent guarantee.

\subsection{Example}
Consider a one-dimensional world $X=[0,1]$. Graphed in \cref{fig:continuousExampleValueFunctions} is the human value function over this world next to a catastrophic C-proxy value function whose highest-valued state is the lowest-valued state according to the humans. In \cref{fig:continuousExampleBoltzmannGrid} we have a grid of images showing the progression of the continuous Boltzmann Distribution as one optimizes for the true value function versus the proxy value function. While the probabilities jump around for small values of $k$, all probability eventually becomes concentrated around the highest point according to the humans and the lowest point respectively.

\begin{figure}
    \centering
    \includegraphics[width=0.65\linewidth]{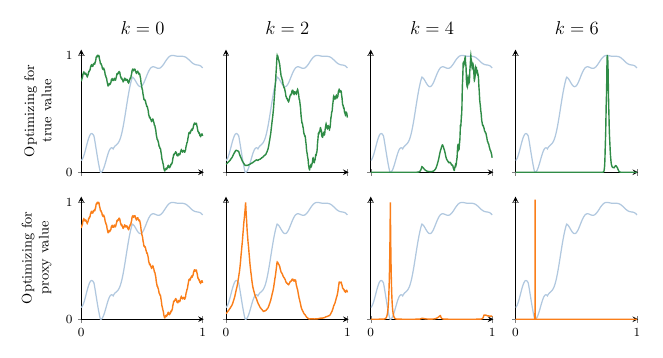}
    \caption{A grid showing the effect of a Boltzmann optimizer on a distribution over the states. In the background of each graph is the true value function shown in blue. Graphs in the top row show the distribution (scaled to fit the box) in green when the target is the true value function while the bottom row graphs show the distribution in orange when the target is the proxy value function. Graphs in the same row show the distribution at the same optimizing power $k$. When there is lots of optimizing power, the distributions become extremely concentrated.}
    \label{fig:continuousExampleBoltzmannGrid}
\end{figure}

\section{Attributes Framework} \label{sec:attributesFramework}
This framework builds on Consequences of Misaligned AI by \citet{zhuang2020consequences}. Value is modeled as being made up of a finite list of attributes describing one aspect of the world that humans care about. The training process is designed such that the agent will care about all of the same attributes that humans do. Despite this seemingly strong guarantee, we find conditions under which partially catastrophic and fully catastrophic proxies can exist.

\subsection{Instantiation}
Any set of world states $X$ can be modeled by this framework. Human values are modeled as being made up of a finite list of attributes $L = \{L_1, L_2, \dots , L_n\}$. Each attribute is a Borel measurable function $L_i : X \rightarrow \mathbb R$ that captures some aspect of the world that humans care about. Examples of attributes could include abstract concepts like ``happiness'' or more concrete concepts like the ``number of cats''. Human value is then represented as a function $f : X \rightarrow \mathbb R$ such that $f(x) = U(L_1(x), L_2(x), \dots, L_n(x))$ where $U : \mathbb R^n \rightarrow \mathbb R$ is strictly increasing \footnote{We write `$U$ is strictly increasing' to mean that $U$ is strictly increasing in every input. That is, for all inputs $1 \leq i \leq n$, for all $L_i \in \mathbb R$, and for all $a > 0$, $U(L_1, L_2, \dots,  L_i, \dots , L_n) < U(L_1, L_2, \dots,  L_i + a, \dots , L_n)$} and is continuous over all of $\mathbb R^n$.\\

The attributes together essentially map each world state to a point in $\mathbb R^n$ where each dimension represents the value of one attribute. Let $L(x) = (L_1(x), L_2(x), \dots, L_n(x))$ represent the projection from the world states $X$ to the attribute states $\mathbb R^n$. Thus, the human value function can be written more concisely as $f(x) = U(L(x))$.\\

Not all attribute states are necessarily physically possible. For example, two attributes may trade off against each other causing world states where both are large to be impossible. We call the set of physically possible attribute states the feasible attribute states, $S \subseteq \mathbb R^n$ which is also the range of the projection function.\\

The training process has been designed so that the agent will care about all of the same attributes that the humans care about which gives rise to the following definition.

\begin{Definition}\label{def:attributeProxy}
A value function $g : X \rightarrow \mathbb R$ is a \textbf{tradeoff proxy} for $f(x) = U(L(x))$ when there is a function $V : \mathbb R^n \rightarrow \mathbb R$ that is continuous over $\mathbb R^n$ and is strictly increasing such that $g(x) = V(L(x))$ for all $x \in X$.
\end{Definition}

Proxies that are missing one or more attributes were studied by \citet{zhuang2020consequences} and were termed ``subset proxies'' by \citet{neth2026against}, so we have named our proxies ``tradeoff proxies'' since our proxies only disagree with the human value function on how it values tradeoffs between attributes. With every part of the scenario instantiated, we now ask when are $\eta$-catastrophic outcomes possible?

\subsection{Existence of Catastrophic Proxies}
Our results in this section apply when the set of feasible states, and thus the human value function, is bounded. To state the condition, we will use a few additional definitions starting with the partial ordering below.

\begin{Definition}\label{def:prec}
Define the relation $\bm \preceq$ for any two elements $s, r \in \mathbb R^n$ as
$$s \preceq r \longleftrightarrow \forall\ 1 \leq i \leq n, \ s_i \leq r_i.$$

Additionally, $s \prec r \longleftrightarrow s \preceq r \wedge s \neq r$.
\end{Definition}

The relation $\prec$, which we read as ``precedes'' (or ``succeeds'' for $\succ$) has the property that $r \prec s$ implies $V(r) < V(s)$ for all strictly increasing functions $V : \mathbb R^n \rightarrow \mathbb R$. We will also be interested in a specific set of attribute states defined below.

\begin{Definition}\label{def:optimizingStates}
Let the feasible states $S \subset \mathbb R^n$ be a bounded set. The \textbf{optimizing states} $\Xi \subset \cl(S)$ are defined $\Xi = \{s \in \cl(S) : \forall r \in \cl(S),  s \not\prec r\}$.
\end{Definition}

The optimizing states can be thought of as the Pareto frontier of the attribute space. From each member of $\Xi$, you cannot increase one attribute without decreasing another or leaving $S$. Now, we can state the primary result.

\begin{restatable}{Theorem}{badProxyExistsAttribute}\label{thm:badProxyExistsAttribute}
Let $S$ be bounded and $f(x) = U(L(x))$. There is an $\eta$-catastrophic tradeoff proxy for $f$ if and only if $\exists s^* \in \Xi \text{ such that } U(s^*) \leq \eta$.
\end{restatable}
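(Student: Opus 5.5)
We prove the two directions separately. Throughout we use that $\cl(S)$ is compact because $S$ is bounded, so $U$ and any continuous $V$ are bounded and uniformly continuous on $\cl(S)$. Hence all value functions involved are bounded, and by the Proposition of \cref{sec:alignmentScenario} it suffices to work with property 3 of \cref{def:optimizer}. The basic fact we use repeatedly is that if $s \prec r$ then $V(s) < V(r)$ for every strictly increasing $V$.

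\textbf{Necessity ($\Rightarrow$).} Suppose $U(s) > \eta$ for every $s \in \Xi$. Let $g = V \circ L$ be any tradeoff proxy; we must exhibit one prior $p$ and one optimizer $\Omega$ with $\limsup_k \mathbb E_{\Omega_k(g,p)}[f] > \eta$.
\begin{enumerate}
\item Let $K = \arg\max_{\cl(S)} V$. It is nonempty and compact.
\item Every $s \in K$ lies in $\Xi$: if $s \prec r$ for some $r \in \cl(S)$, then $V(s) < V(r)$, contradicting maximality. Hence $U > \eta$ on $K$, so $\min_K U = \eta + 2c$ for some $c > 0$.
\item By continuity of $U$, there is $\delta > 0$ with $U > \eta + c$ on the $\delta$-neighborhood $N_\delta(K) \cap \cl(S)$.
\item By compactness, there is $\tau > 0$ such that $\{r \in \cl(S) : V(r) > \max V - \tau\} \subseteq N_\delta(K)$. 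Otherwise a sequence in $\cl(S) \setminus N_\delta(K)$ with $V \to \max V$ would have a limit point in $K$, a contradiction.
\item Since $V(L(x))$ can be made arbitrarily close to $\sup g = \max V$ on a set of states, property 3 of any optimizer forces $\Pr_{\Omega_k(g,p)}[L(x) \in N_\delta(K)] \to 1$.
\item Using boundedness of $U$ on $\cl(S)$ to control the remaining mass, $\liminf_k \mathbb E_{\Omega_k(g,p)}[f] \geq \eta + c$.
\end{enumerate}
Since a Boltzmann-type or conditioning optimizer exists, this shows $g$ is not $\eta$-catastrophic. As $g$ was arbitrary, no tradeoff proxy is $\eta$-catastrophic. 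The same argument in fact works for every $\Omega$ and $p$.

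\textbf{Sufficiency ($\Leftarrow$).} Fix $s^* \in \Xi$ with $U(s^*) \le \eta$. The goal is a continuous, strictly increasing $V$ satisfying the following \emph{localization property}: for every $\delta > 0$,
$$\sup\{V(r) : r \in \cl(S),\ |r - s^*| \geq \delta\} < \sup_{\cl(S)} V = V(s^*).$$
Given this, we argue as follows.
\begin{itemize}
\item For every optimizer and prior, property 3 applied to $g = V \circ L$ concentrates the mass of $\Omega_k(g,p)$ on states with $L(x)$ within $\delta$ of $s^*$.
\item Uniform continuity of $U$ then gives $U(L(x)) \le \eta + \epsilon(\delta)$ on that set, with $\epsilon(\delta) \to 0$.
\item Boundedness of $U$ handles the leftover mass.
\item Letting $\delta \to 0$ yields $\limsup_k \mathbb E[f] \le \eta$.
\end{itemize}
If $s^* \notin S$, nothing changes: we only need $\sup g = V(s^*)$, which holds by continuity because $s^* \in \cl(S)$.

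\textbf{Main obstacle: constructing $V$.} The natural starting point is $m(r) = \min_i (r_i - s^*_i)$.
\begin{itemize}
\item $m$ is continuous and nondecreasing, with $m(s^*) = 0$.
\item $m(r) < 0$ for every $r \in \cl(S) \setminus \{s^*\}$. Indeed, if $m(r) \ge 0$ then $s^* \preceq r$, which forces $r = s^*$ because $s^* \in \Xi$.
\item By compactness, $m \le -c_\delta < 0$ on $\cl(S) \setminus B(s^*,\delta)$ for some $c_\delta > 0$.
\end{itemize}
The difficulty is that $m$ is not strictly increasing. Adding $\epsilon \sum_i r_i$ restores strictness, but a single $\epsilon$ may let gains in some coordinates compensate a small deficit near $s^*$; this is the step I expect to be hardest.

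My first attempt is to replace the linear perturbation by one that vanishes at $s^*$ and is dominated by $|m|$ on $\cl(S)$:
$$V(r) = m(r) + \sum_i \theta(r_i - s^*_i),$$
where $\theta$ is strictly increasing with $\theta(0) = 0$ and $\theta(t) \le \tfrac{1}{2n} \omega^{-1}(t)$-type growth for $t > 0$. Here $\omega$ is a modulus extracted from compactness, relating how far $r \in \cl(S)$ can exceed $s^*$ in some coordinate to how negative $m(r)$ must be, so that the total gain is at most $|m(r)|/2$.

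If this modulus bookkeeping becomes messy, the fallback is a countable sum $V = m + \sum_k 2^{-k} \epsilon_k \sum_i r_i$, with each $\epsilon_k$ chosen against $c_{1/k}$. Alternatively, one can allow $V$ to be strictly increasing but only require $\limsup V \le -c_\delta/2$ away from $s^*$; this is all the localization argument actually needs.
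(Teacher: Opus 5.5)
Your proposal is correct. It proves the theorem by a genuinely different and more elementary route than the paper, in both directions.

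\textbf{Necessity.} The paper fixes a maximizer $s^*$ of $V$ on $\cl(S)$. It then hand-builds an optimizer that puts masses $2^{-(i+1)}$ on states $x_{\lceil k\rceil+i}$ with $L(x_i)\to s^*$, and applies $\eta$-catastrophe to that single optimizer. You instead argue by contraposition, using only property 3 of an arbitrary optimizer. This avoids the monotonicity bookkeeping of a bespoke optimizer and proves more: if $U>\eta$ on $\Xi$, every tradeoff proxy has $\liminf_k \mathbb E_{\Omega_k(g,p)}[f]>\eta$ for every $\Omega$ and $p$. The paper's version, in exchange, shows that \emph{every} maximizer of $V$ on $\cl(S)$ has $U\le\eta$.

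\textbf{Sufficiency.} The paper never writes $V$ down. It characterizes level hypersurfaces of continuous strictly increasing functions, then builds one as the boundary of a tube around the orthant $\Pi=\{q : s^*\preceq q\}$, whose radius needs the successive regularizations $a_1,a_2,a_3$. Your formula $V(r)=m(r)+\sum_i\theta(r_i-s^*_i)$ encodes the same geometry explicitly: the set $\{V>V(s^*)\}$ hugs $\Pi$ and misses $\cl(S)$.

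The step you flagged closes with your own constants.
\begin{enumerate}
\item Let $c_t=\min\{|m(r)| : r\in\cl(S),\ |r-s^*|\ge t\}$. It is positive and nondecreasing, and equals $+\infty$ if the set is empty.
\item Take $\theta(t)=t$ for $t\le 0$ and $\theta(t)=\frac{1}{2n(1+t)}\int_0^t\min(c_u,1)\,du$ for $t>0$. This $\theta$ is continuous, strictly increasing, and satisfies $\theta(t)<c_t/(2n)$.
\item Let $r\in\cl(S)\setminus\{s^*\}$ and $\tau=\max_i(r_i-s^*_i)$. If $\tau>0$, then $|r-s^*|\ge\tau$. So the positive part of the perturbation is at most $n\theta(\tau)<c_\tau/2\le|m(r)|/2$, giving $V(r)<m(r)/2<0=V(s^*)$.
\item If $\tau\le 0$, then $V(r)\le m(r)<0$ directly.
\end{enumerate}
Continuity and strict monotonicity of $V$ on all of $\mathbb R^n$ are immediate.

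Do discard your fallbacks. First, $m+\sum_k 2^{-k}\epsilon_k\sum_i r_i$ is just $m+\epsilon\sum_i r_i$ with $\epsilon=\sum_k 2^{-k}\epsilon_k$. That is exactly the single linear perturbation you rightly distrust, and it does fail. Take $S=\{r : -1\le r_2\le 1,\ -2\le r_1\le -r_2^2\}$, so that $s^*=0\in\Xi$ and $(-t^2,t)\in\cl(S)$. Then $V(-t^2,t)=-t^2+\epsilon(t-t^2)>0=V(s^*)$ for small $t>0$. Second, the $\limsup$ variant relaxes nothing: for continuous $V$ on compact $\cl(S)$, your localization property is already equivalent to $s^*$ being the unique maximizer.
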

The proof is deferred to the appendix in \cref{sec:badProxyExistsAttributesProof}.\\

While all proxies are strictly increasing in the attributes that humans care about, the problem comes from the fact that the proxies can tradeoff between attributes at different rates. For example, if there are two attributes $a$ and $b$ and the human value function is $f(x) = 2a(x) + b(x)$ while $g(x) = a(x) + 2b(x)$, then we can expect that, depending on the shape of $S$, $g$ will optimize towards states that are larger in $b$ while the humans will prefer states that are larger in $a$. The above theorem confirms this intuition by stating that if humans value some optimizing point as low as $\eta$, then there is some $\eta$-catastrophic tradeoff proxy that pushes all probability weight towards this point.\\

Even totally catastrophic outcomes can occur. Below we have a corollary that provides a geometric condition on $S$ in order to support a value function with a catastrophic tradeoff proxy.

\begin{Corollary}\label{cor:badProxyExistsAttributeGeometric}
Let $S$ be bounded. There are functions of the form $f(x) = U(L(x))$ and $g(x) = V(L(x))$ where $U$ and $V$ are continuous and strictly increasing such that $g$ is catastrophic for $f$ if and only if the set $\{s \in \cl(S) : \forall r \in \cl(S), \ r \not\prec s \wedge s \not\prec r\}$ is nonempty.
\end{Corollary}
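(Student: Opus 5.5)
We reduce the corollary to \cref{thm:badProxyExistsAttribute} with $\eta = \inf_{y\in X} f(y) = \inf_{s \in S} U(s) = \min_{s\in\cl(S)} U(s)$. The last equality holds because $U$ is continuous and $\cl(S)$ is compact, as $S$ is bounded. A catastrophic tradeoff proxy is exactly an $\eta$-catastrophic one for this $\eta$. Note that the definition of $\eta$-catastrophic needs $\eta < \sup f$, so $U$ must be nonconstant on $S$; if $U$ were constant on $S$ the notion is vacuous or degenerate, and we will ensure nonconstancy in our construction. By the theorem, some catastrophic proxy exists if and only if there is $s^* \in \Xi$ with $U(s^*) = \min_{\cl(S)} U$. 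Call the set in the statement $I$, the set of points of $\cl(S)$ incomparable to every other point of $\cl(S)$. The task is to show that some strictly increasing continuous $U$ attains its minimum over $\cl(S)$ at a point of $\Xi$ if and only if $I \neq \emptyset$.

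\emph{($\Leftarrow$)} Take $s^* \in I$. It lies in $\Xi$, since no $r \in \cl(S)$ has $s^* \prec r$. We need a continuous strictly increasing $U$ whose minimum over $\cl(S)$ is at $s^*$. Try $U(s) = \min_i (s_i - s^*_i) + \varepsilon \sum_i s_i$. The first term is continuous and weakly increasing, and the added sum makes $U$ strictly increasing.

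For $r \in \cl(S)$ with $r \neq s^*$, incomparability gives some $i$ with $r_i < s^*_i$, so the first term is negative. The $\varepsilon$ term could still break minimality. Choosing $\varepsilon$ small is the delicate point, because $\min_i(r_i - s^*_i)$ can be arbitrarily close to $0$ as $r \to s^*$.

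To fix this, use a steeper first term such as $U(s) = -\sum_i \max(s^*_i - s_i, 0)^{1/2}\cdot C + \sum_i s_i$. Still, a cleaner route is preferable: define $U(s) = \varphi(\min_i (s_i - s^*_i)) + \varepsilon\sum_i (s_i - s^*_i)$ with $\varphi$ strictly increasing and $\varphi(t) \le -K|t|^{1/2}$ for $t<0$. For the minimum to sit exactly at $s^*$, we need $\varphi(\min_i(r_i - s^*_i)) + \varepsilon\sum_i(r_i - s^*_i) \ge 0$ for all $r$, which is the reverse of what we want.

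So instead pick $U$ so that $s^*$ is a minimum in the opposite sense. Set $U(s) = \max_i (s_i - s^*_i) + \varepsilon\sum_i (s_i - s^*_i)$. Incomparability also gives, for $r \neq s^*$, some $j$ with $r_j > s^*_j$, so $\max_i(r_i - s^*_i) > 0$.

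The main obstacle is the perturbation term, which can be negative. Since $\cl(S)$ is bounded with diameter $D$, and $\sum_i (r_i - s^*_i) \ge -n\|r-s^*\|_\infty$ while $\max_i(r_i - s^*_i)$ might be much smaller than $\|r-s^*\|_\infty$, a fixed $\varepsilon$ may fail.

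I would handle this by replacing the linear perturbation with one dominated by the max term. Take $U(s) = \psi(\max_i(s_i - s^*_i)) + \sum_i \arctan(s_i - s^*_i)\cdot 0$. This is not strictly increasing, so a genuine perturbation is still needed. As a fallback, invoke \cref{thm:badProxyExistsAttribute}'s own proof machinery, which presumably constructs $V$ concentrating on a chosen point of $\Xi$. Analogously, build $U$ as a sum $\sum_i h(s_i - s^*_i)$ where $h$ is strictly increasing with $h(t) \ge \lambda t$ for $t>0$ and $h(t) \ge -\mu|t|$ for $t<0$. Choose the slopes so that positive deviations dominate. If some $r\in\cl(S)$ defeats every such choice, compactness lets us extract $r_k \to s^*$ along which the negative coordinates dominate. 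I then expect a limiting direction argument to show $s^*$ is comparable to points of $\cl(S)$ near it, contradicting $s^* \in I$. This direction is where I expect the real work.

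\emph{($\Rightarrow$)} Suppose $U$ is strictly increasing and attains $\min_{\cl(S)} U$ at $s^* \in \Xi$. No $r \in \cl(S)$ has $r \prec s^*$, since that would force $U(r) < U(s^*)$. Membership in $\Xi$ gives no $r \succ s^*$. Hence $s^* \in I$, so $I \ne \emptyset$.

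The case of $U$ constant on $S$ can only occur if $S$ is an antichain singleton issue. I would check it separately, noting that if $|\cl(S)|=1$ the point is trivially in $I$, and that the "if" direction then needs a nonconstant $U$, which requires $|S|\ge 2$.
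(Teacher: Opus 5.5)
Your ($\Rightarrow$) direction is correct and is the paper's argument. Reducing ($\Leftarrow$) to \cref{thm:badProxyExistsAttribute} with $\eta=\min_{\cl(S)}U$ is also sound. But that leaves all of ($\Leftarrow$) in one step: exhibit a continuous, strictly increasing, nonconstant $U$ minimized over $\cl(S)$ at the chosen $s^*\in I$. You never produce it. The min-based formula points the wrong way and the linearly perturbed max formula fails, as you noticed. Your final sketch then relies on a contradiction that does not exist. Take $n=2$, $S=\{(t^2,-t):t\in[0,1]\}$, $s^*=0$. Every point of $S$ is incomparable to $s^*$, yet for all slopes $\lambda,\mu>0$ your separable $U$ gives $U(t^2,-t)=\lambda t^2-\mu t<0$ for small $t$. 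The offending points approach $s^*$ tangentially to $(0,-1)$, on the boundary of the negative orthant. Nothing in $\cl(S)$ precedes $s^*$, so a limiting-direction argument yields nothing. Replacing $t^2$ by $e^{-1/t}$ defeats every power law as well, so any working $U$ must be calibrated to the shape of $\cl(S)$ near $s^*$.

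Compactness gives a positive modulus, not a contradiction. With $s^*=0$ without loss of generality, set
\[
m(\rho)=\min\Bigl\{\max_i r_i \;:\; r\in\cl(S),\ \min_i r_i\le-\rho\Bigr\}\qquad(\rho>0),
\]
with $m(\rho)=+\infty$ if that set is empty. It is positive, since $\max_i r_i\le 0$ together with $\min_i r_i\le-\rho$ would give $r\prec s^*$. It is also nondecreasing in $\rho$. Take $\theta(\rho)=\frac{1}{2n}\int_0^\rho \min(m(t),1)(1+t)^{-2}\,dt$. This is continuous, strictly increasing, vanishes at $0$, and satisfies $\theta(\rho)\le \min(m(\rho),1)/(2n)$. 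Let $h(t)=t$ for $t\ge 0$, $h(t)=-\theta(-t)$ for $t<0$, and $U(s)=\sum_i h(s_i)$. For $r\in\cl(S)\setminus\{0\}$ we have $\rho=-\min_i r_i>0$ because $s^*\not\prec r$. Hence $U(r)\ge m(\rho)-(n-1)\theta(\rho)>0=U(s^*)$, so $s^*$ is the unique minimizer and $U$ is nonconstant once $|S|\ge 2$. The missing idea is to flatten the negative branch of $h$ according to $m$, rather than to tune slopes.

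The paper instead takes the route you listed as a fallback, obtaining $U$ ``by a similar argument'' to the construction in \cref{thm:badProxyExistsAttribute}. That construction produces a maximizer, so using it needs a reflection you did not state. Since nothing in $\cl(S)$ precedes $s^*$, the point $-s^*$ is an optimizing state of $-S$. The construction then gives $W$ with unique maximizer $-s^*$ over $-\cl(S)$, and $U(s)=-W(-s)$ works.

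Your caveat about $|S|=1$ is correct. There $f$ is constant, no $\eta$ satisfies $\inf f\le\eta<\sup f$, and the `if' direction degenerates; the paper passes over this case.
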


\begin{proof}
($\Longrightarrow$) By \cref{thm:badProxyExistsAttribute}, $g$ is catastrophic for $f$ implies there is a point $s^* \in \{s \in \cl(S) : \forall r \in \cl(S), s \not\prec r\}$ such that $U(s^*) = \inf_{s \in S} U(s)$. If $r \in \cl(S)$ has $r \prec s^*$, then $U(r) < U(s^*) = \inf_{s \in S} U(s)$ which would be a contradiction. Therefore, $s^* \in \{s \in \cl(S) : \forall r \in \cl(S), \ r \not\prec s \wedge s \not\prec r\}$.\\

($\Longleftarrow$) Let $s^*$ be a member of the set $\{s \in \cl(S) : \forall r \in \cl(S), \ r \not\prec s \wedge s \not\prec r\} = \Xi \cap \{s \in \cl(S) : \forall r \in \cl(S), \ r \not\prec s\}$. By the construction in the proof of \cref{thm:badProxyExistsAttribute}, there exists a continuous strictly increasing function $V : \mathbb R^n \rightarrow \mathbb R$ that has a unique maximizer $s^* \in \Xi$ over $S$. By a similar argument, there exists a continuous strictly increasing function $U$ with a unique minimizer $s^* \in \{s \in \cl(S) : \forall r \in \cl(S), \ r \not\prec s\}$. Therefore, $g(x) = V(L(x))$ is catastrophic for $f(x) = U(L(x))$.
\end{proof}

\begin{figure}
    \centering
    \includegraphics[width=0.8\linewidth]{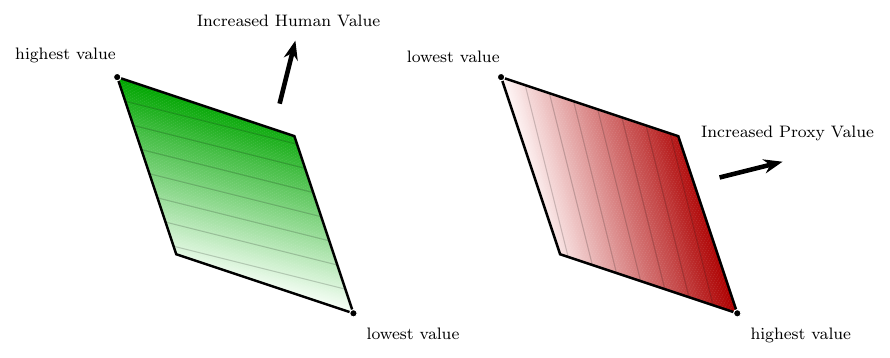}
    \caption{Two diagrams showing a rhombus-shaped feasible region that meets the criteria of \cref{cor:badProxyExistsAttributeGeometric}. The top-left and bottom-right points do not precede and are not preceded by any other points in the region, so it supports catastrophic proxies. \textbf{Left:} An image showing the feasible region with a green gradient indicating the relative preferences of the humans within the feasible region. An arrow indicates the direction of maximum increase in human value. The humans achieve their maximum value at the top-left point of the region and their lowest at the bottom-right. \textbf{Right:} An image showing the feasible region with a red gradient indicating the relative preferences of a tradeoff proxy. An arrow indicates the direction of maximum increase. The proxy achieves its maximum value at the bottom-right point of the region. Thus, optimizing for this proxy will take human value to its lowest value within the region, making this proxy catastrophic.}
    \label{fig:catastrophicAttributeProxy}
\end{figure}

\cref{cor:badProxyExistsAttributeGeometric} says that $S$ must contain a state that precedes no states and is preceded by no states in order to support a fully catastrophic proxy. This is possible, but not terribly natural. An example of a feasible region with such a point is pictured in \cref{fig:catastrophicAttributeProxy}. In addition to this, ``sturdy'' feasible regions which admit no partially catastrophic proxies exist as well and are equally unnatural.

\subsection{Discussion}
Of the frameworks, this one bears the most resemblance to the foundational work on the fragility of value as \citet{yudkowsky2009value} also treats human value as a combination of attributes. This framework also builds on work by \citet{zhuang2020consequences} who find conditions under which omitting one attribute from an agent's value function always results in catastrophe. We instead found conditions under which catastrophe can occur even when all the attributes are accounted for.\\

Our result suggests that, even if the researchers succeed in the costly undertaking of specifying every attribute that influences human value to the agent, the misspecification in how these attributes trade off against each other in terms of human value is enough to cause partial catastrophic outcomes. However, this is also heavily dependent on the shape of the feasible region and the proxy condition at least goes as far as guaranteeing the agent will optimize towards the Pareto frontier of the region.

\subsection{Example}
Consider a restaurant agent tasked with handling a budget to purchase ingredients and with preparing meals. The agent has been trained to care about two attributes: the number of meals served and overall customer satisfaction -- and the agent has reliable, accurate access to the data for each of these attributes.\\

The tradeoff here is in the quality of ingredients the agent decides to buy. If the agent wants higher satisfaction, it will purchase more expensive ingredients to make a smaller number of higher-quality meals.\\

\begin{figure}
    \centering
    \includegraphics[width=0.4\linewidth]{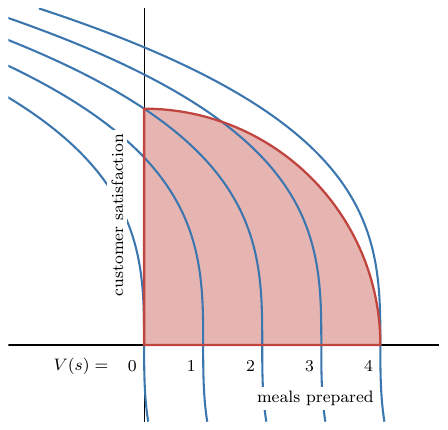}
    \caption{A diagram showing a feasible region for two attributes. The feasible region is shaded red showing the physically possible combinations of `meals prepared' and `customer satisfaction'. Shown as blue lines are the level curves of a strictly increasing function $V$. Despite increasing with both attributes, $V$'s highest-value point within the feasible region is on the bottom-right where customer satisfaction is completely sacrificed to optimize meals served.}
    \label{fig:attributeExampleFeasibleRegion}
\end{figure}

A graph of the feasible region is pictured in \cref{fig:attributeExampleFeasibleRegion}. Every point on the curved edge of the feasible region is an optimizing point. According to \cref{thm:badProxyExistsAttribute}, the lowest-valued optimizing point will be the highest-valued point for some tradeoff proxy. Perhaps surprisingly, this includes the point on the far right of the figure where customer satisfaction is completely sacrificed for meals served. If the humans were hoping for the agent to compromise between the attributes they trained it to care about, they may be in for a bad surprise.

\section{Discussion} \label{sec:discussion}
In this paper, we have presented a model of the alignment problem and have shown conditions under which catastrophic outcomes are possible for a wide class of alignment tests and environments. We don't expect that our model is perfectly representative of the challenges that humanity faces today, but our results do suggest that safely developing superintelligent systems will be a significant undertaking. Catastrophic outcomes are stubbornly present even in our toy models where researchers have the ability to enforce strong alignment guarantees; it seems likely that such outcomes are possible with the state of our current real-world abilities.\\

An optimistic reading of our results is that we have only shown the existence of catastrophic proxies -- not that they are likely to come about. To determine the probability of catastrophe, one would need to define a prior distribution that the proxy value function is pulled from, which is not part of the setup presented in this paper. We are deeply curious about this possible extension of our work. There is evidence to suggest that the inductive bias of the function learned by an ML model generally follows a bias towards simplicity \citep{dingle2020generic}, so it would be natural to model the proxy distribution as a simplicity prior. However, the probability of a given function under a simplicity prior can be made arbitrarily large or small depending on the choice of universal Turing machine, so a stronger assumption about the universal Turing machine induced by a training process would be needed to achieve a result.\\

Our work highlights the dangers of overoptimization. We fear that, even if we remain vigilant for signs of misalignment, an unstoppable runaway could occur without warning and the full optimizing power of a superintelligence could act like an unstoppable force. One possible path forward could be to develop new AI designs not based on utility maximization such as quantilizers \citep{taylor2016quantilizers}. Such systems could be easier to control or redirect when a misspecification of our values comes to light whereas a utility maximizer could attempt to circumvent our correction.\\

We hope that research into the frameworks we presented will lead to advances in our understanding of alignment and ultimately towards techniques that robustly prevent the possibility -- however small -- of a valueless future.

\appendix 
\section{Proofs of Results} \label{sec: proofsOfResults}
\subsection{Finite Framework Proofs}
\badProxyExistsFiniteA* \label{sec:badProxyExistsFiniteAProof}
\begin{proof}
($\Longrightarrow$) Suppose that $\beta < 1/|X|$. Then, a function $g$ is an F-proxy if and only if $g = f$. But $f$ cannot be catastrophic for itself so no catastrophic proxies exist.\\

($\Longleftarrow$) Let $y \in X$ be a world state such that $f(y) = \min_{x \in X} f(x)$. Now define $g$ as
$$g(x) = \begin{cases}
    1 & \text{if } x = y,\\
    f(x) & \text{otherwise}.
\end{cases}$$

Consider the probability of sampling a state where $f$ and $g$ are unequal. Clearly, $\Pr[f(x) \neq g(x)] = 1/|X| \leq \beta$. Therefore, $g$ is an F-proxy for $f$. Additionally, since $g$ has a unique maximizer world state $y$, we have $\lim_{k \rightarrow \infty} \Omega_k(g,p)(y) = 1$ and thus $\lim_{k \rightarrow \infty} \mathbb E_{\Omega_k(g, p)}[f(x)] = f(y) = \min_{x \in X} f(x)$. Therefore, $g$ is a catastrophic F-proxy for $f$.
\end{proof}

In the proof \cref{thm:badProxyExistsFiniteB}, we will make use of the following lemma regarding the behavior of $f$ at points where a catastrophic function $g$ achieves its maximum.

\begin{Lemma} \label{lem:finiteLemma}
If $g$ is $\eta$-catastrophic for $f$, then $\forall y \in X, (g(y) = \max_{x \in X} g(x) \rightarrow f(y) \leq \eta).$
\end{Lemma}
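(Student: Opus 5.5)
We argue by contraposition. Suppose some maximizer $y$ of $g$ has $f(y) > \eta$. We will produce one initial prior $p$ and one optimizer $\Omega$ for which $\limsup_{k\to\infty}\mathbb E_{\Omega_k(g,p)}[f(x)] > \eta$. This contradicts \cref{def:etaCatastrophic}, which requires the bound for \emph{all} priors and \emph{all} optimizers, so exhibiting a single bad pair suffices.

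Take $p$ to be the uniform distribution on $X$; it has full support because $X$ is finite. Define $\Omega$ as follows. Set $\Omega_0(h,p)=p$ for every target $h$. For $k>0$, let $\Omega_k(h,p)$ be the point mass at a designated maximizer $x^*_h$ of $h$. For $h=g$ we choose $x^*_g=y$; for every other $h$ we choose any maximizer, for instance by fixing an enumeration of $X$ and taking the first maximizer in it. A maximizer always exists because $X$ is finite. This is the second example optimizer described after \cref{def:optimizer}, with the choice of maximizer made explicit.

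Next we check that $\Omega$ satisfies \cref{def:optimizer}. Property 1 holds by construction. For property 2, we have $\mathbb E_{p}[h] \le \max h = \mathbb E_{\Omega_k(h,p)}[h]$ for every $k>0$, and the expectation is constant in $k$ on $k>0$, so it is monotone. Properties 3 and 4 hold because for $k>0$ all mass sits at a point where $h = \sup h$. Strictly, $\Omega$ must be defined on arbitrary priors $p'$, not only the uniform one; we use the same rule for every $p'$, and none of the properties depend on the prior. The main thing to be careful about is that $\Omega$ is a single well-defined map on all targets and priors, rather than something tailored to $g$ alone. Hard-coding the choice $x^*_g=y$ for the one function $g$ is still a legitimate definition.

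With this $\Omega$ and $p$, we get $\mathbb E_{\Omega_k(g,p)}[f] = f(y)$ for all $k>0$. Hence $\limsup_{k\to\infty}\mathbb E_{\Omega_k(g,p)}[f(x)] = f(y) > \eta$. This contradicts $g$ being $\eta$-catastrophic, so every maximizer $y$ of $g$ satisfies $f(y)\le\eta$.
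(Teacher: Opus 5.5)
Your proposal is correct and takes essentially the same approach as the paper. Both build an optimizer that, for target $g$, immediately puts all mass on the chosen maximizer $y$, which forces $\limsup_{k\to\infty}\mathbb E_{\Omega_k(g,p)}[f(x)] = f(y)$, and then invoke \cref{def:etaCatastrophic}. The differences are cosmetic: you argue by contraposition, you use point masses rather than a Boltzmann optimizer for the other targets, and you check the optimizer axioms explicitly where the paper leaves them implicit.
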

\begin{proof}
Define $X_{g=\max} = \{x \in X : g(x) = \max_{x \in X} g(x)\}$ and let $\Omega^{(y)}$ be an optimizer that, for the target function $g$, immediately places all probability weight on the one state $y \in X_{g=\max}$ (for other target functions, $\Omega^{(y)}$ acts like a Boltzmann optimizer). If $g$ is $\eta$-catastrophic then we have by \cref{def:etaCatastrophic}, for all $y \in X_{g = \max}$, 
$$\limsup_{k \rightarrow \infty} \mathbb E_{\Omega^{(y)}_k(g,p)} [f(x)] = f(y) \leq \eta.$$

Therefore, $\forall y \in X_{g=\max}, f(y) \leq \eta$ as desired.
\end{proof}

\badProxyExistsFiniteB* \label{sec:badProxyExistsFiniteBProof}
\begin{proof} ($\Longrightarrow$)
Let $g$ be a catastrophic F-proxy for $f$. Suppose for the sake of contradiction that there is some world state $y \in X$ such that $f(y) = g(y) = 1$. By \cref{lem:finiteLemma}, $g(y) = 1 = \max_{x \in X} g(x)$ implies that $1 = f(y) \leq \eta < \max_{x \in X} f(x) = 1$, which is a contradiction. Therefore, any catastrophic $g$ must disagree with $f$ for all values in $X_{f=1}$.\\ 

Now consider some function $g$ that differs from $f$ at exactly the states $X_{f=1}$. Suppose for the sake of contradiction that $g$ is catastrophic for $f$. Consider the states that maximize $g$, $X_{g = \max}$. Since $g$ is catastrophic for $f$, by \cref{lem:finiteLemma}, for all $y \in X_{g=\max}, f(y) = \min_{x \in X} f(x) \neq 1$. Therefore, $X_{g=\max} \cap X_{f=1}$ is empty and, by our assumption, for all $y \in X_{g=\max}$, $f(y) = g(y) = \min_{x \in X} f(x) = \max_{x \in X} g(x)$. Therefore, for all $y \in X$, $g(y) \leq \min_{x \in X} f(x)$ and thus for all $y \not\in X_{f=1}$, $f(y) \leq \min_{x \in X} f(x)$. But then, this contradicts the assumption that $f$ takes at least three values.\\

Therefore, any catastrophic function will differ from $f$ in at least $|X_{f=1}| + 1$ states and it must be true that $\beta \geq \frac{|X_{f=1}|+1}{|X|}$ for a catastrophic function to satisfy the proxy condition.\\

($\Longleftarrow$)
Let $y \in X$ be a world state such that $f(y) = \min_{x \in X} f(x)$. Now define $g$ as
$$g(x) = \begin{cases}
    0 & \text{if } x \in X_{f=1},\\
    1 & \text{if } x = y,\\
    f(x) & \text{otherwise}.
\end{cases}$$

The probability that $g$ differs from $f$ at a state pulled uniformly is $\frac{|X_{f=1}| + 1}{|X|} \leq \beta$, so $g$ is an F-proxy for $f$. Additionally, $g$ has a unique maximizer world state, so regardless of the optimizer, all probability weight will go towards the state $y$ when optimizing for $g$. Therefore,
$$\lim_{k \rightarrow \infty} \mathbb E_{\Omega_k(g, p)}[f(x)] = f(y) = \min_{x \in X} f(x)$$

which shows that $g$ is catastrophic for $f$.
\end{proof}

\subsection{Continuous Framework Proofs}
In order to prove \cref{thm:badProxyExistsContinuous}, we will make use of the following lemma regarding the nature of probability concentration for an optimizer.

\begin{Lemma} \label{lem:continuousLemma}
Let $\Omega$ be an optimizer. If $C \subset X$ is measurable and has the property that \\ $\sup_{x \in C} g(x) < 1$, then
$$\lim_{k \rightarrow \infty} \Pr_{\Omega_k(g, p)}[x \in C] = 0.$$
\end{Lemma}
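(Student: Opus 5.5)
The plan is to use property 3 of \cref{def:optimizer} directly. In the continuous framework every value function $g \in \mathcal V$ is surjective onto $[0,1]$, so $\sup_{y \in X} g(y) = 1$. Set $M = \sup_{x \in C} g(x)$. By hypothesis $M < 1$, and hence $M < \sup_{y \in X} g(y)$. (If $C$ is empty the claim is trivial, so we may assume $C \neq \emptyset$ and $M$ is a real number. Even when $g$ is only assumed Borel with values in $[0,1]$, it suffices to know $\sup g = 1$. That holds in this framework, and it is the only place the framework enters.)

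Property 3 then applies with this $M$ and gives
$$\lim_{k \rightarrow \infty} \Pr_{\Omega_k(g,p)}[g(x) > M] = 1.$$
Next I would observe the set inclusion $C \subseteq \{x \in X : g(x) \leq M\}$, which holds by the definition of $M$ as the supremum of $g$ over $C$. Both sets are measurable: $C$ by assumption and the other because $g$ is Borel. So for every $k$,
$$0 \leq \Pr_{\Omega_k(g,p)}[x \in C] \leq \Pr_{\Omega_k(g,p)}[g(x) \leq M] = 1 - \Pr_{\Omega_k(g,p)}[g(x) > M].$$
The right-hand side tends to $0$, and squeezing gives the claimed limit of $0$.

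There is essentially no obstacle here; the only delicate point is making sure property 3 is applicable. That requires the strict inequality $M < \sup_X g$, and this is exactly where the hypothesis $\sup_{x\in C} g(x) < 1$ and the normalization $\sup_X g = 1$ are needed. If the lemma is meant for a $g$ whose supremum over $X$ might be below $1$, I would instead state it with $\sup_X g$ in place of $1$, and the same argument would go through unchanged.
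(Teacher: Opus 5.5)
Your proof is correct, but it takes a different route from the paper.

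The paper works from property 4 of \cref{def:optimizer} rather than property 3. It splits $\mathbb E_{\Omega_k(g,p)}[g(x)]$ over $C$ and $X - C$, bounds $g$ by $L = \sup_C g$ on $C$ and by $1$ elsewhere, and obtains $\mathbb E_{\Omega_k(g,p)}[g(x)] \le 1 - (1-L)\Pr_{\Omega_k(g,p)}[x \in C]$. It then uses $\mathbb E_{\Omega_k(g,p)}[g(x)] \to 1$. In effect this re-derives, for this particular set, the ``4 implies 3'' direction of the equivalence proposition.

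You instead apply property 3 directly with $M = \sup_C g$ and use the inclusion $C \subseteq \{g \le M\}$. This is shorter, and it matches the paper's own remark that only property 3 needs to be considered in later arguments.

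Your version also has two small advantages:
\begin{itemize}
\item Your squeeze establishes existence of the limit explicitly. The paper ``takes the limit on both sides'' before knowing that the limit of the probability exists. This is harmless, since rearranging to $\Pr[x \in C] \le (1-\mathbb E[g])/(1-L)$ repairs it.
\item Your argument never integrates $g$. As you note, it therefore applies verbatim to any $g$ with $\sup_C g < \sup_X g$, including unbounded ones. The expectation-based argument would need a finite supremum and integrability there.
\end{itemize}

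Both arguments rely on the same framework fact, $\sup_X g = 1$ from the surjectivity of $g \in \mathcal V$, which you correctly identify as the only place the framework enters.
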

\begin{proof}
Since $\Omega$ is an optimizer, we have that $\lim_{k \rightarrow \infty} \mathbb E_{\Omega_k(g,p)}[g(x)] = 1$. Or, written as a Lebesgue integral,
$$\lim_{k \rightarrow \infty} \int_{X} g(x) \mathrm d \Omega_k(g,p)(x) = 1.$$

Now choose an arbitrary $C \subset X$ that is measurable with $\sup_{x \in C} g(x) = L < 1$. Splitting the integral over this set we find
$$\int_{X} g(x) \mathrm d \Omega_k(g,p)(x) = \int_{C} g(x)  \mathrm d\Omega_k(g,p)(x) + \int_{X-C} g(x) \mathrm d\Omega_k(g,p)(x).$$

Using our assumption about $C$ and the fact that $g$ is at most 1 everywhere, we find that for a given value of $k$,
\begin{align*}
\int_{X} g(x) \mathrm d \Omega_k(g,p)(x) &\leq L\int_{C} \mathrm d \Omega_k(g,p)(x) + \int_{X-C} \mathrm d \Omega_k(g,p)(x) \\
&= \int_{X} \mathrm d \Omega_k(g,p)(x) - (1-L)\int_{C} \mathrm d \Omega_k(g,p)(x) \\
&= 1 - (1-L) \Pr_{\Omega_k(g,p)}[x \in C].
\end{align*}

Taking the limit on both sides and using the fact that the limit of the expectation of $g$ is 1, we find
$$1 \leq 1 - (1-L) \lim_{k \rightarrow \infty} \Pr_{\Omega_k(g,p)}[x \in C]$$

And thus,
$$\lim_{k \rightarrow \infty} \Pr_{\Omega_k(g,p)}[x \in C] \leq 0.$$

Since the probability is greater than or equal to 0 for all $k$, the limit of the probability must be 0, as desired.
\end{proof}

\badProxyExistsContinuous* \label{sec:badProxyExistsContinuousProof}
\begin{proof}
We will prove this theorem by constructing a catastrophic proxy value function $g(x)$. First, define $\bar f$ for some value of $\lambda$ where $0 < \lambda < 1/2$ as
$$\bar{f}(x) = \begin{cases}
    \lambda & \text{if } f(x) < \lambda, \\
    1-\lambda  & \text{if } 1-\lambda < f(x), \\
    f(x) & \text{otherwise}.
\end{cases}$$

The function $\bar f$ is just $f$ clamped at extreme values below $\lambda$ and above $1-\lambda$. This function is continuous, but it is not a member of $\mathcal V$ since it does not reach 0 or 1 anywhere within the state space. Next, we will construct $g \in \mathcal V$, from $\bar f$ by adding two spikes: one that goes up to 1 and one that goes down to 0. Let $a, b \in [0,1]^n$ be world states such that $f(b) = 0$ and $a \neq b$. Define $\delta_0 : X \rightarrow \mathbb R$ as the shortest nonzero distance along one axis to any edge of $X = [0,1]^n$. Specifically,
$$\delta_0(s) = \min_{1 \leq i \leq n : s_i \neq 0,1} \min(s_i, 1-s_i)$$

with the convention that the minimum of an empty set is $\infty$. For $0 < \delta < \min(\Vert b-a \rVert/2, \delta_0(a), \delta_0(b))$, define $g$ as
$$g(x) = \begin{cases}
    0 & \text{if } x = a, \\
    \frac{\lVert x - a \rVert}{\delta} \bar f \left(a + \delta \frac{x-a}{\lVert x - a \rVert} \right) & \text{if } 0 < \lVert x - a \rVert < \delta, \\
    1 & \text{if } x = b, \\
    \left(1 - \frac{\lVert x - b \rVert}{\delta} \right) + \frac{\lVert x - b \rVert}{\delta} \bar f \left(b + \delta \frac{x-b}{\lVert x - b \rVert} \right) & \text{if } 0 < \lVert x - b \rVert < \delta, \\
    \bar{f}(x) & \text{otherwise.}
\end{cases}$$

The expression $a + \delta \frac{x-a}{\lVert x - a \rVert}$ projects points $x$ of distance less than $\delta$ of $a$ to a new point exactly $\delta$ from $a$ in the $x-a$ direction. We then use this to interpolate between the values that $\bar f$ takes at the points of distance $\delta$ from $a$ and the apex of the cone, 0 at $a$, thus making $g$ be continuous along the edge of the cone. Our constraint that $\delta < \delta_0(a)$ ensures that the point we project to is actually a point within $X$ and thus part of the domain of $\bar f$. In the case that $a$ is on the boundary of $X$, then there are no points in the direction of that boundary for us to project, hence why $\delta_0(a)$ is defined only as the distance to edges that $a$ does not sit on. The same logic applies to the cone centered at $b$ as well.\\

Since the cones have their apices at 0 and 1, $g$ is the canonical representative of some class of value functions for all choices of $\lambda$ and (small enough) $\delta$.\\

Now we will show that one can choose values of $\lambda$ and $\delta$ such that $g$ is a catastrophic C-proxy for $f$. First, take $\lambda = \min(\alpha/2, 1/4)$. Then $f$ and $g$ will only differ by more than $\alpha$ within the spikes, so we have
$$\Pr_{p_S}[|g(x) - f(x)| > \alpha] \leq \int_{B_{\delta}(a)} p_S(x) \mathrm dx + \int_{B_{\delta}(b)} p_S(x) \mathrm dx$$

where $B_{\delta}(b)$ represents the open ball of radius $\delta$ around the point $b$. Since $p_S(x)$ is a probability density function over $X$, for all $\beta > 0$, there exists some $\delta > 0$ such that the above integrals will be less than $\beta$. Hence,
$$\Pr_{p_S}[|g(x) - f(x)| > \alpha] < \beta$$

which shows $g$ is a C-proxy for $f$. Now, consider the limit superior of the expectation of $f$ when optimizing for $g$:
$$\limsup_{k \rightarrow \infty} \mathbb E_{\Omega_k(g,p)}[f(x)] = \limsup_{k \rightarrow \infty} \int_{X}f(x) \mathrm d \Omega_k(g,p)(x).$$

Let $\epsilon$ be any positive value. Since $f$ and $g$ are continuous, there is some $\delta' > 0$ such that $x \in B_{\delta'}(b)$ implies $f(x) < \epsilon$ and $g(x) > 1-\epsilon$. Splitting the integral over this ball gives
$$\limsup_{k \rightarrow \infty} \mathbb E_{\Omega_k(g,p)}[f(x)] = \limsup_{k \rightarrow \infty} \left[ \int_{B_{\delta'}(b)}f(x) \mathrm d \Omega_k(g,p)(x) + \int_{X - B_{\delta'}(b)}f(x) \mathrm d \Omega_k(g,p)(x) \right].$$

The function $g$ has a unique maximizer $b$, so in the second integral, the supremum of $g$ is less than 1. Thus, by \cref{lem:continuousLemma}, the second integral converges to 0 with $k$. Also, in the first integral, $f(x)$ is bounded above by $\epsilon$ by the definition of $\delta'$. Therefore, we have
$$\limsup_{k \rightarrow \infty} \mathbb E_{\Omega_k(g,p)}[f(x)] \leq \epsilon \limsup_{k \rightarrow \infty} \int_{B_{\delta'}(b)} \mathrm d \Omega_k(g,p)(x) \leq \epsilon.$$

The limit of the expectation of $f$ is bounded above by all positive values $\epsilon > 0$, so the limit must approach $0$ -- the minimum of $f$. Therefore, $g$ is a catastrophic C-proxy for $f$.
\end{proof}

\subsection{Attributes Framework Proofs}
To prove \cref{thm:badProxyExistsAttribute}, we will make use of \cref{lem:proxyLevelHyperSurfaceProperties} and \cref{lem:uniqueMaximizerBadProxy}. The first given below provides properties of level hypersurfaces of strictly increasing functions $V : \mathbb R^n \rightarrow \mathbb R$. A level hypersurface is the set of points formed from constraints of the form $V(s) = a$ where $a$ is a real number.

\begin{Lemma} \label{lem:proxyLevelHyperSurfaceProperties}
$C$ is the nonempty level hypersurface of some continuous and strictly increasing function $V : \mathbb R^n \rightarrow \mathbb R$ if and only if $\forall s, r \in C, s \not\prec r$ and $\forall s \in \mathbb R^n, \exists! k \in \mathbb R : s - k \bm 1 \in C$ where $\bm 1 = (1,1,\dots,1)$.
\end{Lemma}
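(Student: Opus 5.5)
The proof splits into two directions. The forward direction takes a nonempty level set of a strictly increasing $V$ and checks the two properties. The converse builds $V$ from a set $C$ with those properties by taking $V(s)$ to be the unique shift $k$, then proves strict monotonicity and continuity.

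\emph{Forward direction.} Let $C = \{s : V(s) = a\}$ be nonempty.
\begin{itemize}
\item[(i)] Antichain property: if $s, r \in C$ with $s \prec r$, then strict monotonicity in each coordinate gives $V(s) < V(r)$, contradicting $V(s) = V(r) = a$.
\item[(ii)] Existence of the shift: fix $s$ and set $h(k) = V(s - k\bm 1)$. This $h$ is continuous and strictly decreasing, since $s - k'\bm 1 \prec s - k\bm 1$ whenever $k < k'$. Fix some $c \in C$. For large $k$ we have $s - k\bm 1 \preceq c$, so $h(k) \leq a$. For very negative $k$ we have $s - k\bm 1 \succeq c$, so $h(k) \geq a$. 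The intermediate value theorem then gives a $k$ with $h(k) = a$.
\item[(iii)] Uniqueness of the shift: this follows because $h$ is strictly decreasing.
\end{itemize}

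\emph{Converse direction.} Suppose $C$ is a nonempty antichain for $\prec$ and every $s$ has a unique $k(s)$ with $s - k(s)\bm 1 \in C$. Define $V(s) = k(s)$. Then $C = V^{-1}(0)$, because $s \in C$ exactly when $k(s) = 0$, by uniqueness. So it suffices to show $V$ is strictly increasing and continuous.

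Note first that $k(s + t\bm 1) = k(s) + t$.

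\emph{Strict monotonicity.} Suppose $s \prec r$ (it is enough to handle $r = s + a e_i$ with $a > 0$). Assume for contradiction that $k(r) \leq k(s)$. Then
\[
s - k(s)\bm 1 \;\preceq\; s - k(r)\bm 1 \;\prec\; r - k(r)\bm 1 .
\]
This gives two comparable points of $C$ with one strictly preceding the other, contradicting the antichain property. Hence $k(s) < k(r)$.

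\emph{Continuity.} This is the main obstacle; I will get it from monotonicity together with the shift identity. Given $\epsilon > 0$, take $\|r - s\|_\infty < \epsilon$. Then
\[
s - \epsilon\bm 1 \;\prec\; r \;\prec\; s + \epsilon\bm 1 .
\]
Applying monotonicity and the shift identity,
\[
k(s) - \epsilon \;<\; k(r) \;<\; k(s) + \epsilon .
\]
So $V$ is $1$-Lipschitz in the sup norm, and in particular continuous.

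Note that continuity comes from this squeeze between diagonal shifts, not from any topological property of $C$ such as closedness. A proof that tried to argue via the topology of $C$ would be harder.
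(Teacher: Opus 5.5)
Your proof is correct and follows essentially the same route as the paper. The forward direction applies the intermediate value theorem along the diagonal line $s - k\bm 1$; the converse defines $V(s)$ as the unique shift, uses the antichain property for strict monotonicity, and combines the identity $V(s+t\bm 1)=V(s)+t$ with monotonicity to get $1$-Lipschitz continuity in $\|\cdot\|_\infty$. The differences are cosmetic: you use weak comparisons $\preceq$ in the forward direction, and a two-sided squeeze $s-\epsilon\bm 1 \prec r \prec s+\epsilon\bm 1$ in place of the paper's one-sided bound $r \preceq s + \|s-r\|_\infty \bm 1$.
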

\begin{proof}
($\Longrightarrow$)
Let $C$ denote the points that satisfy the equation $V(s) = a$. First consider $s, r \in \mathbb R^n$ with $s \prec r$. We have $V(s) < V(r)$, but then $s$ and $r$ cannot both exist on $C$, as desired.\\

Secondly, consider an arbitrary $s \in \mathbb R^n$ and the line of points defined by $l(k) = s - k \bm 1$. Since $V$ is strictly increasing, we have that $V(l(k))$ is strictly decreasing with $k$. The line $l(k)$ continues infinitely in the positive direction of every dimension and the negative direction of every dimension, so for all points $r \in \mathbb R^n$, there exists $k_1, k_2 \in \mathbb R$ such that $l(k_1) \prec r \prec l(k_2)$. Applying this fact to some $r \in C$, we find that there are $k_1$ and $k_2$ such that $V(l(k_1)) < V(r) = a < V(l(k_2))$. Since $V$ and $l$ are continuous and $V(l(k))$ is strictly decreasing with $k$, there must be a unique $k$ such that $V(l(k)) = a$, and therefore, $l(k) = s - k \bm 1 \in C$.\\

($\Longleftarrow$)
Define the function $V : \mathbb R^n \rightarrow \mathbb R$ by 
$$V(s) = k \text{ such that } s - k \bm 1 \in C.$$

By assumption, $V$ is a well-defined function and clearly, $V(s) = 0$ defines exactly $C$ so $C$ is a level hypersurface of $V$. Now we must show that $V$ is continuous and strictly increasing.\\

Consider $s, r \in \mathbb R^n$ such that $s \prec r$. Then, 
$$s - V(s) \bm 1 \prec r - V(s) \bm 1 = r - V(r)\bm 1 + (V(r) - V(s))\bm 1.$$

By assumption, $s - V(s) \bm 1 \not\prec r - V(r)\bm 1$ since these are both members of $C$. For the above equation to be true, it must be the case that $V(r) - V(s) > 0$ or $V(r) > V(s)$ which shows that $V$ is strictly increasing.\\

From the definition of $V$, we have for any real number $t$, $V(s + t \bm 1) = V(s) + t$. Additionally, for any $s, r \in \mathbb R^n$, $r \preceq s + \| s - r \|_\infty \bm 1$. Without loss of generality, if we assume $V(r) \geq V(s)$, this along with the fact that $V$ is strictly increasing gives
$$|V(r) - V(s)| = V(r) - V(s) \leq V(s+\| s - r \|_\infty \bm 1) - V(s) = \| s - r \|_\infty$$

which shows that $V$ is Lipschitz and thus continuous.
\end{proof}

Now we present a sufficient condition for an $\eta$-catastrophic tradeoff proxy.
\begin{Lemma}\label{lem:uniqueMaximizerBadProxy}
Let $S$ be bounded and $s^* \in \cl(S)$ be a state such that $U(s^*) \leq \eta$ where $f(x) = U(L(x))$. If $s^*$ is the unique maximizer of a continuous strictly increasing function $V$ over $\cl(S)$, then $g(x) = V(L(x))$ is an $\eta$-catastrophic tradeoff proxy for $f(x)$.
\end{Lemma}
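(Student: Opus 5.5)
The plan is to check the two parts of the conclusion separately. The tradeoff-proxy part is immediate: $V$ is continuous and strictly increasing, and $g = V \circ L$, so \cref{def:attributeProxy} holds by assumption. The real work is showing that $g$ is $\eta$-catastrophic. For that I fix an arbitrary prior $p$ and optimizer $\Omega$, and aim to show
$$\limsup_{k\to\infty}\mathbb E_{\Omega_k(g,p)}[f(x)] \le U(s^*) + \epsilon$$
for every $\epsilon > 0$. Since $U(s^*) \le \eta$, this gives the claim.

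First I collect the compactness facts. Because $S$ is bounded, $\cl(S)$ is compact. Since $U$ and $V$ are continuous, both are bounded on $\cl(S)$, so $f$ and $g$ are bounded functions on $X$; let $B = \max_{\cl(S)} |U|$. Also $\sup_{x\in X} g(x) = \sup_{s \in S} V(s) = V(s^*)$. Indeed, $V(s) \le V(s^*)$ on $\cl(S)$ by the maximizer assumption. Conversely, $s^* \in \cl(S)$ and $V$ is continuous, so values of $V$ on $S$ come arbitrarily close to $V(s^*)$, even though $s^*$ itself need not lie in $S$.

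Next, given $\epsilon > 0$, I use continuity of $U$ at $s^*$ to pick $\rho > 0$ with $U(s) < U(s^*) + \epsilon$ whenever $\|s - s^*\| < \rho$. Let $K = \cl(S) \setminus B_\rho(s^*)$.
\begin{itemize}
\item If $K$ is nonempty, it is compact, so $V$ attains a maximum $m$ on $K$. Uniqueness of the maximizer $s^*$ gives $m < V(s^*)$. This is the step I expect to be the crux: I need a uniform gap, not just pointwise strict inequality, and compactness of $\cl(S)$ is exactly what supplies it.
\item If $K$ is empty, the bound below holds trivially with $C = \emptyset$.
\end{itemize}
Set $C = \{x \in X : \|L(x) - s^*\| \ge \rho\}$. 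This set is Borel because $L$ is Borel measurable, and every $x \in C$ has $L(x) \in K$, so $g(x) \le m$ on $C$.

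Now I apply property 3 of \cref{def:optimizer} with $M = m < \sup g$, which gives $\Pr_{\Omega_k(g,p)}[g(x) > m] \to 1$. Since $g \le m$ on $C$, this forces $\Pr_{\Omega_k(g,p)}[x \in C] \to 0$, in the spirit of \cref{lem:continuousLemma} but without needing $\sup g = 1$. I then split
$$\mathbb E_{\Omega_k(g,p)}[f] = \mathbb E[f\,\mathbf 1_{X\setminus C}] + \mathbb E[f\,\mathbf 1_C].$$
The second term has absolute value at most $B \Pr[x \in C] \to 0$. For the first term, $f < U(s^*) + \epsilon$ on $X \setminus C$, and $\Pr[X \setminus C] \to 1$, so its limsup is at most $U(s^*) + \epsilon$. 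Writing it this way handles the case where $U(s^*) + \epsilon$ is negative, so no sign care is needed.

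Combining the two terms gives $\limsup_{k\to\infty} \mathbb E_{\Omega_k(g,p)}[f] \le U(s^*) + \epsilon \le \eta + \epsilon$. Letting $\epsilon \to 0$ yields the bound for all $p$ and $\Omega$, which is \cref{def:etaCatastrophic}.
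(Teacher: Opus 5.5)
Your proof is correct and takes the same approach as the paper. You split the expectation of $f$ according to whether $L(x)$ lies in a small ball around $s^*$, bound $f$ by $U(s^*)+\epsilon$ inside the ball using continuity of $U$, and show that the mass outside the ball vanishes. Your compactness argument gives a uniform gap $m < V(s^*)$ on $\cl(S)\setminus B_\rho(s^*)$, and with property 3 of \cref{def:optimizer} this supplies the justification the paper leaves implicit when it simply asserts that the outside probability tends to $0$ because $V$ has a unique maximizer.
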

\begin{proof}
Since $V$ and $U$ are continuous, for all $\epsilon > 0$, there is a $\delta(\epsilon) > 0$ such that $s \in B_{\delta(\epsilon)}(s^*)$ implies $V(s^*) - V(s) < \epsilon$ and $U(s) - U(s^*) < \epsilon$. We then find that for all $\epsilon > 0$, the expectation of $f$ can be written as
\begin{alignat*}{2}
\mathbb E_{\Omega_k(g,p)}[f(x)] &= \int_{x : L(x) \in B_{\delta(\epsilon)}(s^*)} f(x) d\Omega_k(g,p)(x) &&+ \int_{x : L(x) \not\in B_{\delta(\epsilon)}(s^*)} f(x) d\Omega_k(g,p)(x)\\
&\leq (U(s^*) + \epsilon) \Pr_{\Omega_k(g,p)}[L(x) \in B_{\delta(\epsilon)}(s^*)] &&+ U_{\max} \Pr_{\Omega_k(g,p)}[L(x) \not\in B_{\delta(\epsilon)}(s^*)].
\end{alignat*}

Since $V$ has a unique maximizer at $s^*$, the second probability approaches 0 and the first approaches 1 in the limit of optimizing power. Taking the limit superior, we find
$$\limsup_{k \rightarrow \infty} \mathbb E_{\Omega_k(g,p)}[f(x)] \leq U(s^*) + \epsilon \leq \eta + \epsilon.$$

Since this holds for all $\epsilon > 0$, we have that $\limsup_{k \rightarrow \infty} \mathbb E_{\Omega_k(g,p)}[f(x)] \leq \eta$ which shows that $g$ is $\eta$-catastrophic for $f$.
\end{proof}

Now we continue to the primary result for this framework.

\badProxyExistsAttribute* \label{sec:badProxyExistsAttributesProof}
\begin{proof}
($\Longrightarrow$) Let $g(x) = V(L(x))$ be an $\eta$-catastrophic tradeoff proxy for $f$. Since $S$ is bounded, we can consider an attribute state $s^* \in \cl(S)$ such that $V(s^*) = \sup_{s \in S} V(s)$.\\

Suppose for the sake of contradiction that there is some $r \in \cl(S)$ such that $s^* \prec r$. Then, $V(s^*) < V(r)$. However, this would imply $V(r) \leq \sup_{s \in S} V(s) = V(s^*)$ which is a contradiction. Therefore, there is no $r \in \cl(S)$ such that $s^* \prec r$ and thus $s^* \in \Xi$.\\

Let $\{s_i\} \subset S$ be a sequence that approaches $s^*$ and let $x_i \in X$ be points such that $L(x_i) = s_i$ for all $i$. There is a subsequence $\{s_{i_j}\}$ such that $V(s_{i_j})$ is nondecreasing with $j$, so it suffices to assume $V(s_i)$ is nondecreasing with $i$. Further, for any initial prior $p$, there is an index $i'$ such that $\forall j \geq i', V(s_{j}) \geq \mathbb E_{p}[g(x)]$, so it suffices to assume that $s_i$ starts at such an index so that $\forall i \geq 1, V(s_i) \geq \mathbb E_p [g(x)]$. Then, consider an optimizer $\Omega$ that, for target $g$ and prior $p$, places probability farther and farther up the sequence $s_i$ as $k$ increases. Define $\Omega$ for $k>0$, $g$, and $p$ as
$$\Omega_k(g,p)(x) = \begin{cases}
\frac 1 {2^{i+1}} & \text{ if } x = x_{\lceil k \rceil+i} \quad (i \geq 0),\\
0 & \text{ otherwise.}
\end{cases}$$

For completeness, let $\Omega$ behave like a continuous Boltzmann optimizer for all unspecified inputs. As $k$ increases, whenever it passes an integer threshold, the probabilities will move up one value in the sequence. Since $V(s_1) \geq \mathbb E_p [g(x)]$, the expectation is nondecreasing as $k$ becomes positive. Also, points farther up the sequence have higher value with respect to $V$, so $\mathbb E_{\Omega_k (g,p)}[g(x)]$ is monotonically increasing. Since $s_i$ approaches $s^*$, a maximizer of $V$, the probability will be concentrated in arbitrarily high-valued world states as well, so $\Omega$ meets the criteria of an optimizer.\\

Since $U$ is continuous and since $s_i$ approaches $s^*$, the expectation of $f$ under $\Omega$ will approach $U(s^*)$. Due to our assumption that $g$ is $\eta$-catastrophic for $f$, we have the desired result,
$$\limsup_{k \rightarrow \infty} \mathbb E_{\Omega_k(g,p)}[f(x)] = U(s^*) \leq \eta.$$

($\Longleftarrow$) Let $s^* \in \Xi$ have the property that $U(s^*) \leq \eta$. We will construct a continuous function $V : \mathbb R^n \rightarrow \mathbb R$ that is strictly increasing and has a unique maximizer at $s^*$ within $\cl(S)$.\\

We proceed by constructing a subset $D \subset \mathbb R^n$ that is the level hypersurface of some continuous strictly increasing function $V$ and intersects $s^*$ and no other point in $\cl(S)$. By \cref{lem:proxyLevelHyperSurfaceProperties}, $D$ is the level surface of some continuous strictly increasing function $V$ if and only if it satisfies $\forall s, r \in D, s \not\prec r$ and $\forall s \in \mathbb R^n, \exists! k \in \mathbb R : s - k \bm 1 \in D$, so we will construct $D$ to have these properties.\\

Let $\Pi$ denote the set of points that $s^*$ precedes or is equal to: $\Pi = \{q \in \mathbb R^n : s^* \preceq q\}$. Since $s^* \in \Xi$, we know that $\forall q \in \Pi - \{s^*\}, \exists \epsilon > 0 : B_\epsilon (q) \subset \cl (S)^C$. We can imagine a ``cloud'' around $\Pi$ that our surface $D$ can inhabit without overlapping with $S$. Let $d(r, S)$ denote the distance from a point $r \in \mathbb R^n$ to the set $S$: $d(r, S) = \inf_{s \in S} d(r,s)$. Then, the set $\bigcup_{q \in \Pi} B_{\frac{d(q, S)}{2}}(q)$ represents such a cloud around (and including) $\Pi$.\\

Now consider the family of hyperplanes $P_v$, parametrized by a real number $v$ and defined by the equation
$$\sum_i (x_i - s^*_i) = v.$$

Notice that $P_0$ intersects $s^*$. The intersection $\Pi \cap P_v$ forms a simplex for $v>0$ and at $v=0$, $\Pi \cap P_0 = \{s^*\}$. Define the function $a_1(v)$ as one-half the Euclidean distance from the set $\Pi \cap P_v$ to $S$ for nonnegative values of $v$:
$$a_1(v) = \frac{d(\Pi \cap P_v, S)}{2}$$

where the distance between two subsets of $\mathbb R^n$, $R$ and $S$, is defined as $d(R,S) = \inf_{r \in R, s \in S} d(r,s)$. Since $\Pi \cap P_v$ is compact, for all $v>0$, there is a point $q \in \Pi \cap P_v$ such that $\frac{d(q, S)}{2} = a_1(v)$.\\

This function has some useful properties. First we prove that $a_1(v)$ is continuous. Consider positive real number $v$ and $v'$ with $a_1(v') \geq a_1(v)$ and let $q \in \Pi \cap P_v$ be a state such that $d(q, S)/2 = a_1(v)$. Since $q \in \Pi \cap P_v$, we have that $\sum_i (q_i - s^*_i) = v$. Defining $q' = q - (1-\frac{v'}{v})(q-s^*)$, we find
$$\sum_i (q'_i - s^*_i) = \sum_i (q_i - s^*_i) - \left(1-\frac{v'}{v}\right)\sum_i (q_i - s^*_i) =  v'.$$

Also, for all positive $v'$ and $v$, $1 > 1- \frac {v'}{v}$, so $s^* \preceq q'$ and $q' \in \Pi \cap P_{v'}$. The distance $d(q',q)$ is given by
$$d(q',q) = \left| 1 - \frac{v'}{v} \right| d(q,s^*) \leq |v - v'|.$$

Therefore, we have
$$|a_1(v') - a_1(v)| = a_1(v') - a_1(v) \leq \frac{d(q', S)}{2} - \frac{d(q, S)}{2} \leq \frac{d(q', q) + d(q, S)}{2} - \frac{d(q, S)}{2} \leq \frac{|v'-v|}{2}$$

which shows that $a_1(v)$ is $1/2$-Lipschitz for $v>0$. Additionally, since $s^* \in \cl(S)$, for all $v$, $a_1(v) \leq v$ and thus, $\lim_{v \rightarrow 0^+} a_1(v) = 0 = a_1(0)$, showing $a_1$ is continuous everywhere.\\

Next, we prove that $a_1(v)$ is positive for $v>0$. Suppose for the sake of contradiction that $v>0$ and $a_1(v) = 0$. Then, if $q \in \Pi \cap P_v$ is a point such that $d(q, S)/2 = a_1(v) = 0$, this implies $q \in \cl (S)$. But, $s^* \prec q$ which contradicts the assumption that $s^* \in \Xi$.\\

Finally, notice that since $S$ is bounded, $a_1(v)$ is eventually strictly increasing. Consider the new function $a_2(v)$ defined as
$$a_2(v) = \inf_{v' \geq v} a_1(v').$$

By definition, $a_2(v)$ is nondecreasing. Also, $a_2(v)$ is continuous since $a_1(v)$ is continuous, and $a_2(v)$ is positive for $v>0$ since $a_1(v)$ is positive for $v>0$ and is eventually strictly increasing. Since $a_1(v)$ is eventually strictly increasing, $a_2(v)$ will eventually be equal to $a_1(v)$ and will thus eventually be strictly increasing as well.\\

Suppose that $a_2$ is constant on the intervals $[\alpha_1, \beta_1], [\alpha_2, \beta_2], \dots$ where this list may be countably infinite. The values of the endpoints $\beta_i$ are bounded above since $a_2$ is eventually strictly increasing. Then, we can construct the new function $a_3(v)$ defined as
$$a_3(v) = \min \left(a_2(v), \inf_{i \geq 1 : \beta_i > v} \left\{\frac{a_2(\beta_i)}{\beta_i} v\right\}\right).$$

\begin{figure}
    \centering
    \includegraphics[width=0.45\linewidth]{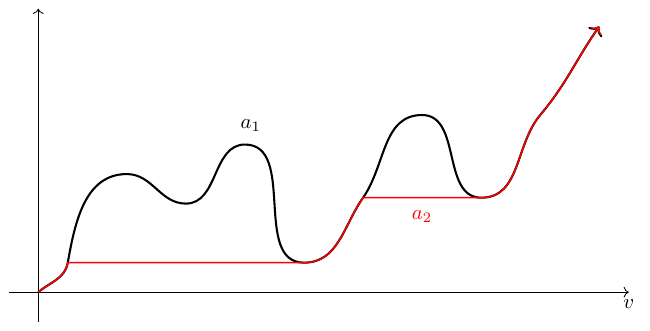}
    \includegraphics[width=0.45\linewidth]{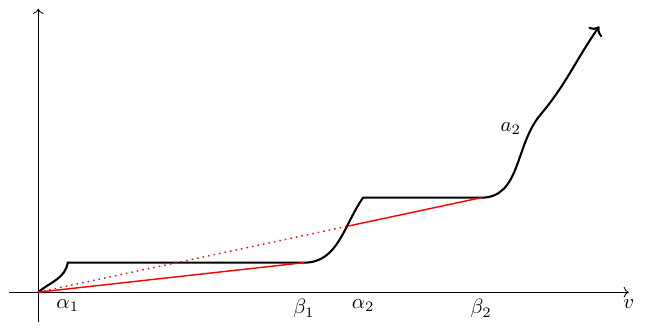}
    \caption{Two graphs showing the construction of functions used in the proof. \textbf{Left:} A graph showing the construction of $a_2$ from $a_1$. The function $a_2$ only increases when $a_1$ never falls below a given value again, so $a_2$ is nondecreasing. Since $a_1$ is eventually strictly increasing, $a_2$ is eventually equal to $a_1$ and is eventually strictly increasing as well. \textbf{Right:} A graph showing the construction of $a_3$ from $a_2$. The red straight lines connect the endpoint of each constant region to the origin. The function $a_3$ will take the value of the solid parts of the segments which fall underneath the value of $a_2$. Since the line segments are strictly increasing and $a_2$ is eventually strictly increasing, $a_3$ is strictly increasing everywhere.}
    \label{fig:attributeProofRadii}
\end{figure}

Essentially, $a_3(v)$ takes the smallest value of $a_2(v)$ and the line segments connecting the endpoints of each constant section $(\beta_i, a_2(\beta_i))$ to the origin $(0,0)$ at each value of $v$. The construction of the functions $a_2$ and $a_3$ is pictured in \cref{fig:attributeProofRadii}. Since the values of the endpoints are bounded, $\inf_{i \geq 1 : \beta_i > v} \left\{\frac{a_2(\beta_i)}{\beta_i} v\right\}$ is positive for all $v>0$, and thus $a_3(v)$ is positive for $v>0$. Additionally, $a_3(v)$ is continuous since $a_2(v)$ and the line segments are continuous as well. Since each line segment is strictly increasing and $a_2(v)$ is eventually strictly increasing, $a_3(v)$ is strictly increasing everywhere.\\

Finally, we define the set $D$ using the function $a_3(v)$. Since $a_3(v)$ is less than the distance $d(\Pi \cap P_v, S)$, the $a_3(v)$-radius ball around any point $q \in \Pi \cap P_v$ (Including the empty ball of radius 0 at $s^*$) will be entirely outside of $\cl(S)$ -- as will the union of all such balls. Let $B_\epsilon (R)$ denote the union of balls of radius $\epsilon$ around all points within $R$, defined as
$$B_\epsilon (R) = \{s \in \mathbb R^n :  d(s, R) < \epsilon\}.$$

We will define $D$ to be the set of points constructed from the boundary of the union of the balls of radius $a_3(v)$ around the sets $\Pi \cap P_v$. Specifically, $D$ is defined as
$$D = \bd \left( \bigcup_{v \geq 0} \left[ P_v \cap B_{a_3(v)}(\Pi \cap P_v) \right] \right).$$

Due to the definition of $a_3$, $D$ intersects the closure of $S$ exactly at $s^*$. We claim that $D$ satisfies the properties of \cref{lem:proxyLevelHyperSurfaceProperties}. To aid in our proofs of these two properties, we define $v(s) = \sum_i (s_i - s^*_i)$ which is the value such that $s$ is a member of the plane $P_{v(s)}$ and we define $b(s) = d(s, \Pi\cap P_{v(s)})$ for $s \in \mathbb R^n$ such that $v(s) \geq 0$.\\

First, we will show $v$ and $b$ are continuous. The function $v$ is Lipschitz due to the equivalence of norms in $\mathbb R^n$ since
$$|v(s') - v(s)| = \left| \sum_i s'_i - s_i\right| \leq  \| s' - s \|_1.$$

Now, take $s, s' \in \mathbb R^n$ such that $b(s') \geq b(s)$ and $v(s'), v(s)$ are positive. Let $q \in \Pi \cap P_{v(s)}$ be such that $d(q,s) = b(s)$. Then, $q' = q - (1 - \frac{v(s')}{v(s)})(q-s^*) \in \Pi \cap P_{v(s')}$ , so we have
$$|b(s') - b(s)| = b(s') - b(s) \leq d(q', s') - d(q, s) \leq d(q', q) + d(s, s') \leq |v(s') - v(s)| + d(s, s').$$

Since $v$ is Lipschitz, this shows $b$ is as well. Next, we will prove that $s \in D$ if and only if $b(s) = a_3(v(s))$. Define the set $E$ as
$$E = \bigcup_{v \geq 0} \left[ P_v \cap B_{a_3(v)}(\Pi \cap P_v) \right]$$ 

so that $D = \bd(E)$. Each point $s \in \mathbb R^n$ belongs to exactly one plane, $P_{v(s)}$, so $s \in E$ if and only if there is a point $q \in \Pi \cap P_{v(s)}$ such that $d(q, s) < a_3(v(s))$. Thus, we can rewrite $E$ as 
$$E = \{s \in \mathbb R^n : v(s) \geq 0, b(s) < a_3(v(s))\}.$$

The set $E$ is the union of open sets, so it is open. If $b(s) = a_3(v(s))$, then there is some $q \in \Pi \cap P_{v(s)}$ such that $s \in \cl(B_{a_3(v)}(q)) \subset \cl(E)$. If $b(s) > a_3(v(s))$, then there is no point $q \in \Pi \cap P_{v(s)}$ such that $s \in \cl(B_{a_3}(q))$, and so $s$ is not a member of the closure of $E$. Put together, we can write the closure of $E$ as
$$\cl(E) = \{s \in \mathbb R^n : v(s) \geq 0, b(s) \leq a_3(v(s))\}$$

and thus $D = \cl(E) - E$ can be rewritten as
$$D = \{s \in \mathbb R^n : v(s) \geq 0, b(s) = a_3(v(s))\}.$$

Therefore, $s \in D$ if and only if $b(s) = a_3(v(s))$.\\

Now, we will show that $\forall s, r \in D, s \not\prec r$. Suppose for the sake of contradiction that there are points $s, r \in D$ such that $s \prec r$. Then, it must be true that $v(s) < v(r)$. Let $q \in \Pi \cap P_{v(s)}$ be a point such that $d(s, q) = d(s, \Pi \cap P_{v(s)}) = a_3(v(s))$. We know that $s, q \in P_{v(s)}$ and $r \in P_{v(r)}$. Using the definition of the planes $P_v$, we find
$$\sum_i (q_i + r_i - s_i) - s_i^* = v(r) + \sum_i (q_i - s_i^*) - \sum_i (s_i-s_i^*) = v(r).$$

So, $q + r - s$ is a member of $\Pi \cap P_{v(r)}$. But then this gives
$$a_3(v(r)) = d(r, \Pi \cap P_{v(r)}) \leq d(r, q + r - s) = d(q, s) = a_3(v(s)).$$

But $a_3$ is strictly increasing, making this a contradiction. This proves that no members of $D$ can precede each other.\\

Finally, we show that for all $s \in \mathbb R^n$, there is a unique $k \in \mathbb R$ such that $s - k \bm 1 \in D$. Consider, for $s \in \mathbb R^n$, the line defined by $l(k) = s + k \bm 1$ for all real $k$. There is a $k_0$ such $l(k_0) \in P_0$ and a $k_\Pi$ such that $l(k_\Pi)$ is on the boundary of $\Pi$. Thus, we have $b(l(k_0)) \geq 0$ and $b(l(k_\Pi)) = 0$. Additionally, over the range $[k_0, k_\Pi]$, $b(l(k))$ is decreasing. To see why, suppose that $q$ is the nearest point in the set $\Pi \cap P_{v(l(k))}$ to $l(k)$. Then, for all $t>0$, $q' = q + t\bm 1$ is a member of $\Pi \cap P_{v(l(k+t))}$ and thus, $b(l(k)) = d(l(k), q) = d(l(k+t), q') \geq b(l(k+t))$ which shows that $b(l(k))$ is decreasing with $k$.\\

Since $a_3$ is continuous and strictly increasing, $a_3(v(l(k)))$ is continuous and strictly increasing with $k$. Additionally since $b(l(k))$ is continuous and decreasing with $k$, there must be a unique $k$-value such that $b(l(k)) = a_3(v(l(k)))$. In other words, there is a unique $k$-value such that $s + k \bm 1 = s - (-k) \bm 1 \in D$ as desired. With all of the needed properties satisfied, $D$ forms a valid level surface of a strictly increasing continuous function $V$ according to \cref{lem:proxyLevelHyperSurfaceProperties}.\\

Consider $r \in \mathbb R^n$ with $V(r) > V(s^*)$. Then, there is a unique $k$ such that $r + k \bm 1 \in D$ which gives $V(r + k \bm 1) = V(s^*)$. Since $V$ is strictly increasing, $k$ must be negative and since $b(r+k\bm 1)$ is decreasing with $k$ and $a_3(v(r+k \bm 1))$ is strictly increasing with $k$, it must be the case that $b(r) < a_3(v(r))$ which shows $r \in E$. Since $E \cap \cl(S) = \emptyset$, the highest-valued level hypersurface that intersects $\cl(S)$ is $D$, thus $V$ achieves its maximum over $\cl(S)$ at $s^*$. By \cref{lem:uniqueMaximizerBadProxy}, $g(x) = V(L(x))$ is then an $\eta$-catastrophic tradeoff proxy for $f$.
\end{proof}

\section*{Contributions}
Winter Cross proved the main theorems and wrote the paper. Léo Cymbalista contributed to formulating the continuous framework, generalized the proof of \cref{thm:badProxyExistsContinuous}, and designed the functions used in \cref{fig:continuousExampleValueFunctions} and \cref{fig:continuousExampleBoltzmannGrid}. Alfred Harwood contributed to developing \cref{def:optimizer} and formulating the finite and continuous frameworks. Jose Faustino contributed to the early conceptualization stage of the work.

\section*{Acknowledgments}
This work was funded by the Advanced Research + Invention Agency (ARIA) through project code MSAI-SE01-P005. We would like to thank Santiago Cifuentes and Alex Altair for their comments and suggestions on the draft of this paper.

\bibliographystyle{plainnat}
\bibliography{biblio}

\end{document}